\theoremstyle{plain}
\theoremstyle{definition}
\newtheorem{definition}{Definition}
\newtheorem{assumption}{Assumption}
\theoremstyle{remark}
\icmltitlerunning{Generative Adversarial Symmetry Discovery}
\newcommand{\model}{{LieGAN}}
\newcommand{\cmark}{\ding{51}}%
\newcommand{\xmark}{\ding{55}}%
\DeclareMathOperator*{\argmin}{arg\,min}
\newcommand{\T}[1]{{\mathcal{#1}}} 
\begin{document}

\twocolumn[
\icmltitle{Generative Adversarial Symmetry Discovery}



\icmlsetsymbol{equal}{*}

\begin{icmlauthorlist}
\icmlauthor{Jianke Yang}{ucsd}
\icmlauthor{Robin Walters}{equal,ne}
\icmlauthor{Nima Dehmamy}{equal,ibm}
\icmlauthor{Rose Yu}{ucsd}
\end{icmlauthorlist}

\icmlaffiliation{ucsd}{University of California San Diego}
\icmlaffiliation{ne}{Northeastern University}
\icmlaffiliation{ibm}{IBM Research}

\icmlcorrespondingauthor{Rose Yu}{roseyu@ucsd.edu}

\icmlkeywords{symmetry discovery, equivariance, equivariant neural network, geometric deep learning, scientific machine learning, Lie theory, generative adversarial training}

\vskip 0.3in
]



\printAffiliationsAndNotice{\icmlEqualContribution} 

\begin{abstract}
Despite the success of equivariant neural networks in scientific applications, they require knowing the symmetry group a priori. 
However, it may be difficult to know which symmetry to use as an inductive bias in practice. Enforcing the wrong symmetry could even hurt the performance.
In this paper, we propose a framework, \model, to \textit{automatically discover equivariances} from a dataset using a paradigm akin to generative adversarial training.  Specifically,  a generator learns a group of transformations applied to the data, which preserve the original distribution and fool the discriminator. \model~represents symmetry as interpretable Lie algebra basis and can discover various symmetries such as the rotation group $\mathrm{SO}(n)$,  restricted Lorentz group $\mathrm{SO}(1,3)^+$ in trajectory prediction and top-quark tagging tasks. The learned symmetry can also be readily used in several existing equivariant neural networks to improve accuracy and generalization in prediction. Our code is available at \href{https://github.com/Rose-STL-Lab/LieGAN}{https://github.com/Rose-STL-Lab/LieGAN}.
\end{abstract}

\section{Introduction}

Symmetry is an important inductive bias in deep learning. For example, convolutional neural networks \cite{cnn} exploit translational symmetry in images, and graph neural networks utilize permutation symmetry in graph-structured data \cite{gcn}. Equivariant networks have led to significant improvement in generalization, sample efficiency and scientific validity \cite{deepsets, e2cnn, gaugecnn, wang2021incorporating}. Interest has surged in  both theoretical analysis and practical techniques for building general group equivariant neural networks \cite{kondor2018generalization, cohen2019general, bspline-cnn, emlp}. 

\vspace{1mm}

\begin{figure}[h]
    \centering
    \includegraphics[width=0.46\textwidth]{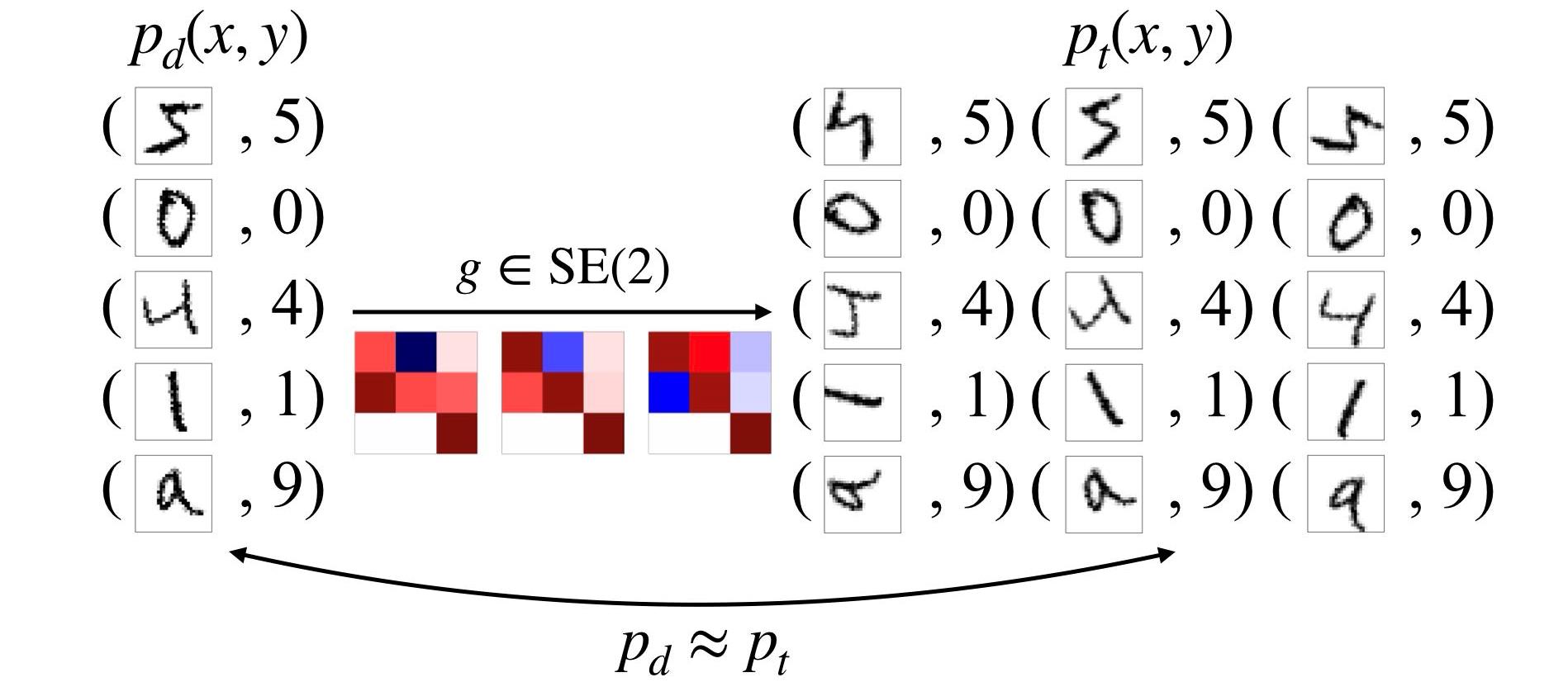}
    \caption{Connection between symmetry and data distribution. MNIST classification is invariant to a subset of $\mathrm{SE}(2)$ transformations. If the digits are transformed by random rotations and translations within this subset, the resulting data distribution $p_t$ remains close to the original distribution $p_d$.}
    \label{fig:mnist-demo}
    \vspace{-2mm} 
\end{figure}

However, a key limitation of  equivariant neural networks is that they require explicit knowledge of the data symmetry before a model can be constructed. In practice, it is sometimes difficult to identify the true symmetries of the data, and constraining the model by the exact mathematical symmetry might not be optimal in real-world situations \cite{ruiapprox}. These challenges call for approaches that can  automatically discover the underlying symmetry of the data. 

Neural networks that discover unknown symmetry play the role of AI scientists
, not only by making data-driven predictions, but also by identifying and describing physical systems through their symmetries and generating new scientific insights through the close relationship between symmetry, conservation laws and underlying governing equations \cite{noethernetwork}.
Most existing work on symmetry discovery can only address a small fraction of potential symmetry types, such as finite groups \cite{msr}, subsets of a given group \cite{augerino} or individual group elements \cite{sgan}. L-conv \cite{lconv} can discover continuous symmetries without discretization of the groups, but has limited computational efficiency. A more general framework is needed for the discovery of various real-world symmetries.


In this work, we present a novel framework for discovering continuous symmetry from data using  generative adversarial training \cite{gan}.
We establish the connection between symmetry and data distribution as in figure \ref{fig:mnist-demo}. Our method then trains a symmetry generator that transforms the training data and outputs a similar distribution to the original dataset, which suggests equivariance or invariance to the learned tranformations. Making use of the theory of Lie groups and Lie algebras, our method, \textit\model{}, is able to discover continuous symmetries as matrix groups. Moreover, through different parameterization strategies, it can also deal with other types of symmetries, such as discrete group transformation, as well as the subset of a group.

Our main contributions can be summarized as follows: 
\begin{enumerate}
    \item We propose \model, a method for automatically discovering symmetries from data, capable of learning general linear symmetries, including the rotation group $\mathrm{SO}(n)$ and restricted Lorentz group $\mathrm{SO}(1,3)^+$.
    \item \model~is interpretable, directly yielding an orthogonal Lie algebra basis as a discovery result. 
    \item We show that the Lie algebra learned by \model~leads to excellent performance in downstream tasks such as $N$-body dynamics and top quark tagging.
    \item We propose LieGNN, a modified E(n) Equivariant Graph Neural Network (EGNN) \cite{egnn} that integrates symmetries learned by \model, achieving similar performance to equivariant models with ground truth symmetries. 
\end{enumerate}

\section{Related Work}
\paragraph{Equivariant Neural Networks.}
Many works have addressed the problems of designing neural network modules that are equivariant to specific transformations, such as permutation in sets \cite{deepsets}, local gauge transformations \cite{gaugecnn}, scaling \cite{deepscale}, rotation on spheres \cite{spherecnn} and general $\mathrm E(2)$ transformations on Euclidean plane \cite{e2cnn}. Another branch of works focus on developing theoretical guidelines and practical methods for building general group equivariant neural networks \cite{gconv, kondor2018generalization, cohen2019general, lieconv, emlp}. However, these methods rely on explicit a priori knowledge of the data symmetry. Instead, we are interested in discovering knowledge of the symmetry itself. The learned symmetry can then be used to select or design an equivariant neural network to make predictions.

\paragraph{Generative Adversarial Training.} The original generative adversarial network (GAN) \cite{gan} uses a generator to transform random noise into target distribution. Many variants of GAN have been proposed to address different tasks other than the unrestricted generation \cite{cgan, stylegan, Isola2016ImagetoImageTW, cyclegan, dagan}. In particular, CycleGAN \cite{cyclegan} learns a generator that maps the input image to another domain. DAGAN \cite{dagan} also takes data points from a source domain and generalizes them to a broader domain with a generator to perform data augmentation, which is related to our task, as the augmenting process can be regarded as a set of transformations to which the data is invariant. These works use samples from the original distribution instead of random noise as generator input and perform domain transfer or generalization with a generator. Our work proposes another usage of such design.
The generator in our model produces transformations that are applied to data samples, and discovers the underlying symmetry by learning the correct set of transformations.

\paragraph{Symmetry Discovery.}
Many existing symmetry discovery methods \cite{augerino,msr,p-gcnn,krippendorf2020detecting} limit their search space to a small fraction of potential symmetry types. MSR \cite{msr} reparameterizes network weights into task weights and a symmetry matrix and meta-learns the symmetry matrix to provide information about task symmetry. However, it can only be applied to finite groups and scales linearly in space complexity with the size of the group, which eliminates the possibility of applying this algorithm to infinite continuous groups. Augerino \cite{augerino} addresses a different but relevant scenario: learning the extent of symmetry within a given group. Partial G-CNN \cite{p-gcnn} also learns group subsets via distributions on group to describe the symmetry at different levels in the model. These approaches can only be applied to cases where the symmetry group is known. \citet{krippendorf2020detecting} proposes to detect symmetries by constructing a synthetic classification task and examining the structure of network embedding layers. This method involves some manual procedures, such as defining the classification task and choosing the metric for latent space analysis. Our work aims to address all of the above limitations within a unified, automated framework.

The theory of Lie group and Lie algebra plays an important role in describing continuous symmetries. LieGG \cite{liegg} extracts symmetry learned by neural network by solving for network polarization matrix and proposes several metrics to evaluate the degree of symmetry. L-conv \cite{lconv} develops a Lie algebra convolutional network that can model any group equivariant functions. However, it performs first order approximation for matrix exponential and uses recursive layers to push the kernel away from the identity, which may become too expensive in practice.

\citet{sgan} also proposes to discover symmetries of the dataset distribution with a GAN. A major limitation of their algorithm is that the model can only learn one group element in a single round of training and has to rely on other techniques such as subgroup regularization or group composition to identify the group. Also, their definition of symmetry is different from ours.

Comparison between \model~and other works on symmetry discovery can be found in Table \ref{tab:comparison}. To the best of our knowledge, our approach is the \textit{first} to address the discovery of such a variety of symmetries including discrete group, continuous group, and subset of given or unknown group.

\begin{table}[h]
\caption{Comparison of different models' capability of discovering different kinds of symmetries}
\begin{center}
\begin{small}
\begin{sc}
    \begin{tabular}{c|>{\centering\arraybackslash}p{.4cm}>{\centering\arraybackslash}p{1.2cm}>{\centering\arraybackslash}p{1cm}>{\centering\arraybackslash}p{1cm}}
    \hline
        Symmetry & MSR & Augerino & LieGG& \model \\
        \hline
        Discrete & \cmark & \xmark & \xmark & \cmark \\
        Continuous & \xmark & \xmark & \cmark & \cmark \\
        Group subset & \xmark & \cmark & \xmark & \cmark \\
    \hline
    \end{tabular}
\end{sc}
\end{small}
\end{center}
    
    \label{tab:comparison}
\end{table}

\section{Background}
Before presenting our methodology, we provide some preliminary concepts that appear frequently in our work. We assume basic knowledge about group theory.
\paragraph{Lie group.} A Lie group is a group that is also a differentiable manifold. It can be used to describe continuous transformations. For example, all 2D rotations form a Lie group $\mathrm{SO}(2)$, where rotation with angle $\theta$ can be represented by $R=\begin{bmatrix}\cos\theta & -\sin\theta \\ \sin\theta & \cos\theta\end{bmatrix}$. Also, all Euclidean transformations including reflection, rotation and translation form the Lie group of $\mathrm{E}(n)$. Each Lie group is associated with a Lie algebra, which is its tangent vector space at identity: $\mathfrak g=T_{\mathrm{Id}}G$. The basis of the Lie algebra $L_i\in\mathfrak g$ are called (infinitesimal) generators of the Lie group. Group elements that are infinitesimally close to identity can be written in terms of these generators: $g=\mathrm{Id}+\sum_i\epsilon_iL_i$.

We can use an exponential map $\exp:\mathfrak g\rightarrow G$ to map Lie algebra elements to Lie group elements. For matrix groups, matrix exponential is such a map. For a connected Lie group $G$, its elements can be written as $g=\exp(\sum_iw_iL_i)$. 

\paragraph{Group representation.}
We are interested in how group elements transform the data. 
We assume that the input space is $\mathcal X=\mathbb R^n$. 
A group element $g \in G$ can act linearly on $x\in \mathcal{X}$ via $\rho_\mathcal{X}(g)$, where $\rho_\mathcal{X}:G\rightarrow \mathrm{GL}(n)$ is a group representation.
$\rho_\mathcal{X}$ maps each group element $g$ to a nonsingular matrix $\rho_\mathcal{X}(g)\in\mathbb R^{n\times n}$ that transforms the input vector.

A group representation $\rho:G\to \mathrm{GL}(n)$ induces a representation for the Lie algebra $\mathfrak{g}=T_{\mathrm{Id}}G$ denoted as $d\rho:\mathfrak g\rightarrow \mathfrak{gl}(n)$, which relate to the representation of its Lie group by $\exp(d\rho(L))=\rho(\exp(L))$.

\section{Symmetry Discovery}
We aim to automatically discover symmetry from data. 
Formally, let $\mathcal{D}=\{({x}_i, {y}_i) \}_{i=1}^N$ be a dataset with distribution $ x_i, y_i\sim p_d( x, y)$, input space $\T{X}=\mathbb{R}^n$, output space $\T{Y}=\mathbb{R}^m$ and an unknown function $f:\T{X}\rightarrow\T{Y}$. 
We have: 
\begin{definition}[Equivariance]
Suppose a group $G$ acts on $\mathcal{X}$ and $\mathcal{Y}$ via representations $\rho_\mathcal X: G\rightarrow \mathrm{GL}(n)$ and $\rho_\mathcal Y: G\rightarrow \mathrm{GL}(m)$. 
Then, a function $f: \T{X}\to \T{Y} $ is \emph{equivariant} if $\forall g\in G$,  $( x, y)\in \mathcal{D}$, $\rho_\mathcal Y(g) y=f(\rho_\mathcal X(g) x)$.
 We omit $\rho_\mathcal X$ and $\rho_\mathcal Y$ when clear and write $g y=f(g x)$.
 \label{thm:sym_def}
\end{definition}
%

We also address invariance in this work, which is a special case of equivariance when $\rho_\T{Y}(g)=\mathrm{Id}$. Next, we describe the formulation to relate symmetry discovery with generative adversarial training \cite{gan}.  

\subsection{Generative Adversarial Symmetry Discovery} 
By Definition \ref{thm:sym_def}, if a group element acts on the input, the output of an equivariant function is also transformed correspondingly by the representation of the same element. From another perspective, if all the data samples are transformed in this way, the transformed data distribution should remain similar to the original dataset distribution, as is demonstrated in figure \ref{fig:mnist-demo}.

At a high level,  we want to design a generator that can efficiently produce transformed input  and a discriminator that can not distinguish real samples from the dataset and the outputs from the generator. Through adversarial training, the generator tries to fool the discriminator by learning a group of transformations that minimize the divergence between the transformed and the original distributions. This group of transformations defines the symmetry  of interest.

We present our symmetry discovery framework in figure \ref{fig:structure}. We are interested in how the group $G$ acts on data through its representations $\rho_\mathcal X$ and $\rho_\mathcal Y$.  We learn $G$ as a subgroup of $\mathrm{GL}(k)$ for some $k$ chosen based on the task. The representations $\rho_\mathcal X \colon \mathrm{GL}(k) \to \mathrm{GL}(n)$ and $\rho_\mathcal Y \colon \mathrm{GL}(k) \to \mathrm{GL}(m)$ are also chosen and fixed based on the task. 
The GAN generator $\Phi$ samples an element from a distribution $\mu$ defined on $\mathrm{GL}(k)$ and then applies it to $ x$ and $ y$:
%
\begin{equation}
    \Phi( x, y)=(\rho_\mathcal{X}(g) x,\rho_\mathcal{Y}(g) y)
\end{equation}
For an invariant task, for example, we set $k = n$ and $\rho_\mathcal{X} = \mathrm{Id}$ the standard representation and $\rho_\mathcal{Y} = 1$ the trivial representation. For a time series prediction task, predicting a system state based on $t$ previous states, we set $k = m$ and $n = tm$ and $\rho_\mathcal{X} = \mathrm{Id}^{\oplus t}$ and $\rho_\mathcal{Y} = \mathrm{Id}$.  

\begin{figure*}[t!]
    \centering
    \includegraphics[width=.9\textwidth]{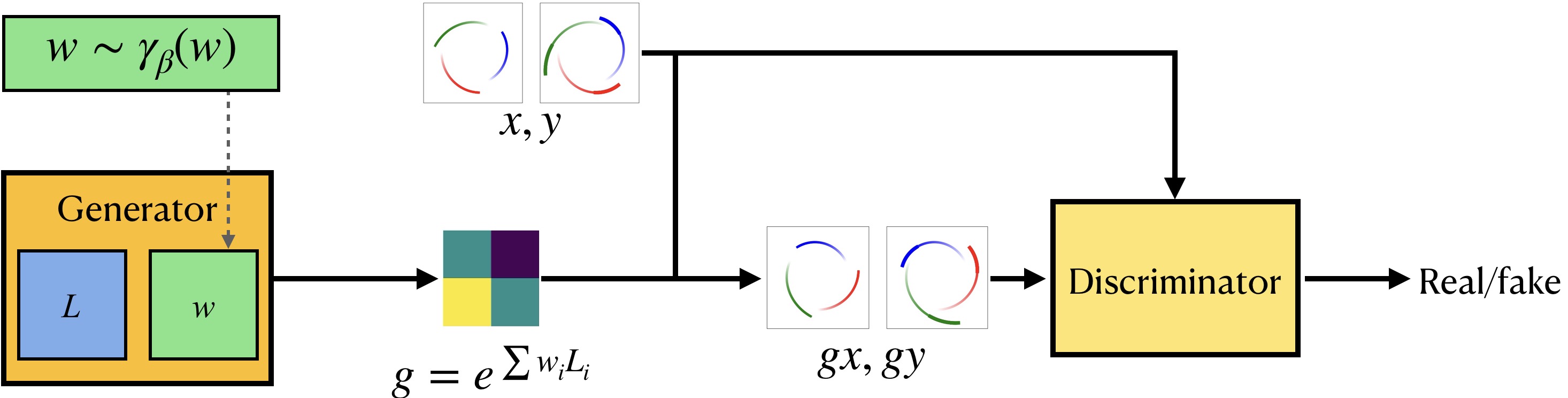}
    \caption{
    Structure of the proposed \model~model. The transformation generator learns a continuous Lie group acting on the data that preserves the original joint distribution. For example, this figure shows a task of predicting future 3-body movement based on past observations, where the generator could learn rotation symmetry.}
    \label{fig:structure}
\end{figure*}

Under this formulation, the generator should learn a subgroup of the transformations to which $f:\mathcal X \rightarrow\mathcal Y$ is equivariant. That is, it should generate a distribution close to the original data distribution. Similar to the setting in GAN, we optimize the following minimax objective:
\begin{align}
    &\min_\Phi\max_D 
    L(\Phi,D)\cr =&\mathbb{E}_{ x, y\sim p_d,g\sim\mu}\Big[\log D( x, y) 
    +\log(1- D(\Phi( x, y)))\Big]\cr
    =&\mathbb{E}_{ x, y\sim p_d}[\log D( x, y)]+\mathbb{E}_{ x, y\sim p_g}[\log(1- D( x, y))] \label{eq:gan_loss}
\end{align}
where $D$ is a standard GAN discriminator that outputs a real value as the probability that $(x,y)$ is a real sample, $p_d$ is the density of the original data distribution and $p_g$ is the generator-transformed distribution given according to change-of-variable formula by
\begin{equation}
    p_g( x, y)
=\int_g\mu(g)p_{d}(g^{-1} x,g^{-1} y)/(|\rho_\mathcal{X}(g)||\rho_\mathcal{Y}(g)|)dg
    \label{eq:p_gen}
\end{equation}

Under the ideal discriminator, the generator in the original GAN formulation minimizes the JS divergence between two distributions \cite{fgan}. In our setting, we prove that our generator can achieve zero divergences with the correct symmetry group under certain circumstances.

\begin{restatable}{theorem}{strong}
The generator can achieve zero JS divergence by learning a maximal subgroup $G^* \subset \mathrm{GL}(n)$ with respect to which $y=f(x)$ is equivariant if $p_d(x)$ is distributed proportionally to the volume of inverse group element transformation along each orbit of $G^*$-action on $\mathcal X$, that is, $p_d(gx_0)\propto |\rho_\mathcal X (g^{-1})||\rho_\mathcal Y (g^{-1})|$.
\label{thm:strong}
\end{restatable}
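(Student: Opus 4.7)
The plan is to exhibit a generator distribution $\mu$ supported on $G^*$ for which the induced density $p_g$ coincides pointwise with $p_d$, and then invoke the standard GAN result that under the Bayes-optimal discriminator the inner supremum of (\ref{eq:gan_loss}) equals $-\log 4 + 2\,\mathrm{JSD}(p_d\Vert p_g)$. Once $p_g = p_d$, the JS divergence vanishes and the minimax objective attains its global minimum. The first step is to recast the hypothesis as an orbit-translation identity: writing $x = hx_0$ along a given orbit and applying multiplicativity of the determinant to $p_d(hx_0)\propto|\rho_\mathcal X(h^{-1})||\rho_\mathcal Y(h^{-1})|$ yields
\[
p_d(g^{-1}x)\;=\;p_d(x)\,|\rho_\mathcal X(g)|\,|\rho_\mathcal Y(g)| \qquad \text{for all }g\in G^*.
\]

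Next, I substitute into the change-of-variables formula (\ref{eq:p_gen}). Because the data is supported on the graph of $f$, write $p_d(x,y) = p_d(x)\,\delta(y - f(x))$, use the equivariance $g^{-1}f(x) = f(g^{-1}x)$, and invoke the delta-rescaling identity $\delta(g^{-1}v) = |\rho_\mathcal Y(g)|\,\delta(v)$. Combining these with the orbit-translation identity, the $g$-dependent factors produced in the integrand cancel the Jacobian denominator $|\rho_\mathcal X(g)||\rho_\mathcal Y(g)|$ in~(\ref{eq:p_gen}), and integration against $\mu$ collapses the expression to $p_d(x,y)$. Hence $p_g = p_d$, from which the \cite{gan} argument yields $\mathrm{JSD}(p_d\Vert p_g) = 0$. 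Maximality of $G^*$ is built into the conclusion because the derivation is valid for any subgroup of the equivariance group of $f$; enlarging $G^*$ beyond that group would break $g^{-1}f(x) = f(g^{-1}x)$ and destroy the cancellation.

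The hardest step will be Step~2: because $p_d(x,y)$ is singular on the graph of $f$, the delta-function manipulations are formal and must be justified either by disintegration along the projection $(x,y)\mapsto x$ or by regularizing $p_d$ with a small-bandwidth kernel and passing to the limit. The appearance of \emph{both} $|\rho_\mathcal X|$ and $|\rho_\mathcal Y|$ in the hypothesis is no accident: $|\rho_\mathcal X|$ compensates the Jacobian of the $x$-action, while $|\rho_\mathcal Y|$ compensates the rescaling produced by transporting the delta distribution along $y\mapsto g^{-1}y$. Any weaker calibration of the hypothesis would leave a residual $g$-dependent factor that prevents exact matching of $p_g$ and $p_d$ and would force one to additionally assume unimodularity of $\rho_\mathcal Y$ on $G^*$ (as holds automatically for $\mathrm{SO}(n)$ and $\mathrm{SO}(1,3)^+$).
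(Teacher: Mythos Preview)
Your proposal follows essentially the same route as the paper's proof: both substitute the hypothesis into the change-of-variables formula~(\ref{eq:p_gen}) to replace $p_d(g^{-1}x)/(|\rho_\mathcal{X}(g)||\rho_\mathcal{Y}(g)|)$ by $p_d(x)$, invoke equivariance on the conditional/delta factor, and integrate out $\mu$ to obtain $p_g=p_d$. The only cosmetic difference is that the paper writes the conditional step as $p_d(g^{-1}y\mid g^{-1}x)=p_d(y\mid x)$ rather than via your delta-function identity $\delta(g^{-1}v)=|\rho_\mathcal{Y}(g)|\,\delta(v)$; these are the same formal manipulation at the same level of rigor.
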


The hypothesis of Theorem \ref{thm:strong} is equivalent to saying that $p_d(x)$ is uniform along each group action orbit when the transformation is volume preserving, as in the case of rotation. However, as this is often not satisfied in practice, there is no guarantee that the generator can achieve zero divergences with nonidentical transformations. Nevertheless, as formalized in the following theorem, the generator can learn a nontrivial symmetry under some weak assumptions.

\begin{restatable}{theorem}{weak}
Under assumptions \ref{assum:existence}, \ref{assum:neg} and \ref{assum:uniformity}, the GAN loss function under the ideal discriminator $L(\Phi,{D^*})$ is lower with a generator that learns a subspace of the true Lie algebra $\mathfrak{g}^*$ than a generator with an orthogonal Lie algebra to $\mathfrak{g}^*$. That is, if $\mathfrak{g}_1\cap\mathfrak{g}^*\not={\mathrm{\{\mathbf0\}}}$, $\mathfrak{g}_2\cap\mathfrak{g}^*=\mathrm{\{\mathbf0\}}$, then $L(\mathfrak{g}_1, D^*)<L(\mathfrak{g}_2,D^*)=0$.
\label{thm:weak}
\end{restatable}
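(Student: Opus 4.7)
The plan is to reduce the theorem to a comparison of Jensen--Shannon divergences via the standard GAN analysis, and then leverage the relationship between $\mathfrak{g}_1,\mathfrak{g}_2$ and the true Lie algebra $\mathfrak{g}^*$ to compare the support overlaps of $p_g$ with $p_d$. Plugging the optimal discriminator $D^*(x,y)=p_d(x,y)/(p_d(x,y)+p_g(x,y))$ into \eqref{eq:gan_loss} yields the familiar identity $L(\Phi,D^*) = 2\,\mathrm{JSD}(p_d\|p_g) - 2\log 2$, so $L(\Phi,D^*)\in[-2\log 2,\,0]$, with $L(\Phi,D^*)=0$ iff $\mathrm{supp}(p_d)$ and $\mathrm{supp}(p_g)$ are disjoint up to measure zero, and $L(\Phi,D^*)<0$ whenever they overlap on a set of positive measure. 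This reduces the theorem to establishing disjoint supports in the orthogonal case and positive-measure overlap in the intersecting case.

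For $L(\mathfrak{g}_2,D^*)=0$, I would invoke Assumption \ref{assum:uniformity} (interpreting it as asserting that $p_d$ concentrates on a union of $G^*$-orbits) together with Assumption \ref{assum:neg} (which prevents $\mu$ from being a Dirac atom at the identity). The key geometric observation is that $\mathfrak{g}_2\cap\mathfrak{g}^*=\{\mathbf{0}\}$ means no nonzero element of $\mathfrak{g}_2$ is tangent to any $G^*$-orbit, so for $\mu$-almost every $g=\exp(L)$ with $L\in\mathfrak{g}_2$, the image $g\cdot(x,y)$ leaves $\mathrm{supp}(p_d)$ whenever $(x,y)\in\mathrm{supp}(p_d)$. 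Consequently $\mathrm{supp}(p_g)\cap\mathrm{supp}(p_d)$ has measure zero, giving $\mathrm{JSD}(p_d\|p_g)=\log 2$ and $L(\mathfrak{g}_2,D^*)=0$. Assumption \ref{assum:existence} supplies the nontrivial $\mathfrak{g}^*$ needed to make the orbit structure meaningful.

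For $L(\mathfrak{g}_1,D^*)<0$, I would decompose $\mathfrak{g}_1 = (\mathfrak{g}_1\cap\mathfrak{g}^*)\oplus V$ for some complementary subspace $V$. Elements of $\mathfrak{g}_1\cap\mathfrak{g}^*$ generate transformations that preserve the data distribution by equivariance of $f$ and the orbit-uniformity hypothesis, so they send $\mathrm{supp}(p_d)$ back into itself. Provided $\mu$ has continuous positive density on $\mathfrak{g}_1$, the push-forward integral \eqref{eq:p_gen} includes contributions from a neighborhood of $\mathfrak{g}_1\cap\mathfrak{g}^*$ that land inside $\mathrm{supp}(p_d)$, producing strictly positive overlap, whence $\mathrm{JSD}(p_d\|p_g)<\log 2$ and $L(\mathfrak{g}_1,D^*)<0$. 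The main obstacle I anticipate is showing that this overlap is not washed out by the off-orbit contributions of $V$: one must carefully split \eqref{eq:p_gen} into an on-orbit piece and an off-orbit piece, lower-bound the on-orbit piece using smoothness of $\mu$ and of the exponential map, and handle the delicate case in which $\mathfrak{g}_1\cap\mathfrak{g}^*$ is a low-dimensional sliver inside $\mathfrak{g}_1$ so that $V$ dominates in dimension. This is where a quantitative form of Assumption \ref{assum:uniformity} (e.g.\ boundedness of $p_d$ along orbits) is likely needed to avoid the overlap being of measure zero.
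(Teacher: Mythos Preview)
Your opening reduction is correct and matches the paper: plugging in the optimal discriminator gives the Jensen--Shannon form, so the whole theorem reduces to comparing support overlap of $p_d$ and $p_{gen}$. From there, however, your handling of the two cases diverges from the paper, largely because you have misread what Assumptions~\ref{assum:neg} and~\ref{assum:uniformity} actually say.

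\textbf{Misread assumptions.} Assumption~\ref{assum:neg} asserts $\int_{\bar{G^*}}\mu(g)\,p_d(gy\mid gx)\,dg=0$: the \emph{conditional} density vanishes under any non-symmetry transformation. It says nothing about $\mu$ avoiding a Dirac atom at the identity. Assumption~\ref{assum:uniformity} provides, for each orbit, a point $x_0$ and a $\mu$-neighborhood $\delta_0(m)$ of the identity on which $p_d(gx_0)\geq c$ and $V(g)\in(v_m,V_m)$; it is a quantitative density-and-Jacobian lower bound, not a statement that $p_d$ ``concentrates on $G^*$-orbits''.

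\textbf{The $\mathfrak{g}_2$ case.} The paper does not argue via tangent spaces. From $\mathfrak{g}_2\cap\mathfrak{g}^*=\{0\}$ it deduces $G_2\cap G^*=\{\mathrm{id}\}$, so every non-identity $g\in G_2$ lies in $\bar{G^*}$ and Assumption~\ref{assum:neg} kills the factor $p_d(gy\mid gx)$ in the integrand of $p_{gen}$. The paper then splits $G_2$ into a bounded neighborhood $\delta_0(1-\eta)$ (where this factor is zero) and its complement (controlled by Gaussian decay of $\mu$), concluding $p_{gen}(x,y)=0$ on $\{p_d>0\}$ and hence $L(\mathfrak{g}_2,D^*)=0$. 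Your geometric transversality picture is plausible intuition but is not what the assumption furnishes, and it has its own gap: $\mathfrak{g}_2\cap\mathfrak{g}^*=\{0\}$ at the Lie-algebra level does not by itself prevent $G_2$ and $G^*$ from sharing orbits or higher-order tangencies.

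\textbf{The $\mathfrak{g}_1$ case.} The paper never decomposes $\mathfrak{g}_1=(\mathfrak{g}_1\cap\mathfrak{g}^*)\oplus V$. It simply takes the neighborhood $X=\delta_0(m)x_0$ supplied by Assumption~\ref{assum:uniformity}, on which $p_d(x,f(x))>c$, and lower-bounds
\[
p_{gen}(x,f(x))\;\geq\;\int_{g\in\delta_0(m)}\mu(g)\,p_d(gx)\,V(g)\,dg\;\geq\;m\,c\,v_m\;>\;0,
\]
so the supports overlap on a set of positive measure and $L(\mathfrak{g}_1,D^*)<0$. Your ``main obstacle'' about off-orbit contributions from $V$ washing out the overlap is misplaced in this framing: the integrand in $p_{gen}$ is nonnegative, so extra contributions can only \emph{increase} $p_{gen}$, never cancel it. You only need positivity of the integrand on a set of positive $\mu$-measure, and Assumption~\ref{assum:uniformity} is engineered to hand you exactly that via the neighborhood $\delta_0(m)$ (which, crucially, is defined to have $P_\mu(\delta_0(m))=m>0$ in the generator's own group). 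The decomposition and the associated dimension-counting worry are unnecessary detours.

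In short: keep your JSD reduction, but rebuild both branches around the actual content of the assumptions---the vanishing conditional for $\mathfrak{g}_2$, and the ready-made positive-measure neighborhood with density and Jacobian bounds for $\mathfrak{g}_1$.
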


Theorem \ref{thm:weak} ensures that a partially correct symmetry results in lower loss function value than an incorrect symmetry. In other words, optimizing \eqref{eq:gan_loss} leads to symmetry discovery.

The related assumptions and proofs for Theorem \ref{thm:strong} and \ref{thm:weak} are deferred to Appendix \ref{prf:gan}.

\subsection{Parameterizing Distributions Over Lie Group}
We use the theory of Lie groups to model continuous sets of transformations.
To parameterize a distribution on a Lie group with $c$ dimensions and $k$ representation dimensions, our model learns Lie algebra generators $\{L_i\in\mathbb R^{k\times k}\}_{i=1}^c$ and samples the coefficients $w_i\in\mathbb R$ for their linear combination from either a fixed or a learnable distribution. The Lie algebra element is then mapped to a Lie group element using the matrix exponential \cite{reparam-lie}.
\begin{equation}
    w\sim \gamma_\beta(w), \quad
    g=\exp\Big[\sum_iw_iL_i\Big]
    \label{eq:lie-parameterization}
\end{equation}
The coefficient distribution $\gamma_\beta$ can be either fixed or updated, depending on our focus of discovery. 
If we have little information on the group, then by learning the $L_i$ and leaving the coefficient distribution fixed, our model can still express distributions over many different groups. On the other hand, we may want to find a subgroup or a subset of some known group. For example, the symmetry may be some discrete subgroup of SO(2) for some tasks. In this case, we fix $L$ as the rotation generator and learn $\gamma_\beta$, revealing peaks at certain values. Learning $\gamma_\beta$ is also useful when the task is not equivariant to the full group, but displays invariance for a subset of transformations, for example, the case of MNIST image classification, where the rotation by $\pi$ will obscure the boundary between ``6'' and ``9'' \cite{augerino}.
Generally, allowing for $\beta$ to be learnable gives the model more freedom to discover various symmetries. 

The coefficient distribution $\gamma_\beta$ may be parameterized in different ways. A normal distribution centered at the origin is a natural choice, since it assigns the same probability density for a group element and its inverse, and the variance can either be fixed or learned through the reparameterization trick \cite{vae}. However, a multimodal distribution like a Gaussian mixture model may be better at capturing discrete subgroups.

However, we note that a limitation of using the Lie algebra to parameterize transformations is that it can only capture a single connected component of the Lie group. Some groups such as $\mathrm E(n)$ do not have a surjective exponential map and their group elements must be described by introducing additional discrete generators: $g=\exp[\sum_it_iL_i]\prod_jh_j$. 

\subsection{Regularization Against Trivial Solutions}
In our optimization problem \eqref{eq:gan_loss},  the generator can learn a trivial symmetry of identical transformation.
We alleviate this issue by penalizing the similarity between the input and output of the generator.  Let $R$ be similarity function on $\mathcal{X} \times \mathcal{Y}$. The regularizer is then defined
\begin{align}
    l_{\mathrm{reg}}( x, y)&=R(\Phi( x, y),( x, y)).
    \label{eq:reg}
\end{align}
We note that the similarity function need to recognize the difference in data before and after the transformation. In practice, we use cosine similarity, which only has scaling invariance in all dimensions.

Another issue arises when dealing with multi-dimensional Lie groups.  The model is encouraged to search through different directions in the manifold of the general linear group with multiple channels, i.e. $L_i$'s. In practice, however, we find that they tend to learn similar elements. To address this problem, we introduce another regularization against the channel-wise similarity, denoted as
\begin{align}
    l_{\mathrm{chreg}}(\Phi)=\sum_{1\leq i<j\leq c}R_{\mathrm{ch}}(L_i,L_j)
    \label{eq:chreg}
\end{align}
where $c$ is the number of channels in generator and $R_{\mathrm{ch}}$ is the cosine similarity for $L_i$ weights. 
We also consider setting $R_{\mathrm{ch}}$ to be the  Killing form \cite{knapp1996lie}
, a metric defined in the Lie algebra.  In this case, minimizing $l_{\mathrm{chreg}}$ corresponds to discovering an orthogonal basis for the Lie algebra. 
In practice, we find that cosine similarity works best.

Combining these regularizers with \eqref{eq:gan_loss}, we optimize the following objective:
\begin{align}
    L_\mathrm{reg}(\Phi,{D})=&\mathbb{E}_{( x, y),g}[\log D( x, y)+\log(1- D(\Phi( x, y)))\cr
    &+\lambda\cdot l_{\mathrm{reg}}( x, y)]+\eta\cdot l_{\mathrm{chreg}}(\Phi)
\end{align}

\subsection{Model Architecture}
\model~consists of two components, the generator and the discriminator. The generator simply samples a Lie group element to transform the input data and does not have any deep neural network. The discriminator can be any network architecture that fits the input. In practice, we use Multi-layer Perceptron (MLP) as a discriminator unless otherwise stated. We find that the generator loss is usually higher than the discriminator loss during training, which suggests that the generator's task of finding the correct set of symmetry transformations is harder. Therefore, a simple discriminator architecture is sufficient.

\section{Using the Discovered Symmetry}
The discovered symmetry can be used as an inductive bias to aid prediction. For instance, Augerino \cite{augerino} develops an end-to-end pipeline that simultaneously discovers invariance and trains an invariant model. For our method, there are multiple ways of utilizing the learned Lie algebra representation in downstream prediction tasks.

\subsection{Data Augmentation}
A natural idea would be augmenting the training data with the transformation generator in \model, which would lead to better generalization and robustness similar to other data augmentation approaches \cite{dao2019kernel}. To perform data augmentation in the equivariance scenario, we transform the input with group element $g$ and transform the model output with $g^{-1}$ to obtain the final prediction
\begin{align}
    \hat y = g^{-1}f_\mathrm{model}(gx).
\end{align}

\subsection{Equivariant Model}\label{sec:equiv-model}
The discovered symmetry from \model~can also be easily incorporated into existing equivariant models due to its explicit representation of the Lie algebra. This procedure is specific to different equivariant model architectures. We provide two examples of incorporating the learned symmetry into EMLP \cite{emlp} and EGNN \cite{egnn}, which are also used in experiments.

\paragraph{EMLP.} \citet{emlp} provide a simple interface for building equivariant MLPs for arbitrary matrix groups. We can directly use the discovered Lie algebra basis as input to the method and obtain an MLP equivariant to the corresponding connected Lie group, with only a minor modification to the model. The original EMLP constructs a constraint matrix according to the specified group and projects network weights to its null space. This does not work well with the symmetry discovered by \model, because it inevitably has some numerical error that results in a higher rank constraint matrix and thus lower rank weight matrix. In practice, we raise the singular value threshold to obtain an approximate equivariant subspace with more dimensions. Implementation details can be found in Appendix \ref{sec:nbody-detail}.

\paragraph{EGNN.} \citet{egnn} encode the $\mathrm E(n)$ equivariance in a graph neural network (GNN) by computing invariant edge features using the Euclidean metric. Similarly, \citet{lorentznet} develop a Lorentz invariant GNN for jet tagging by computing invariant edge features using the Minkowski metric. Both methods may be summarized as:
\begin{align}
    m_{ij}=&\phi_e(h_i,h_j,\|x_i-x_j\|_J^2,\langle x_i,x_j\rangle_J)\cr
    \text{where }& \|u\|_J =\sqrt{u^TJu},\quad \langle u,v\rangle_J=u^TJv
\label{eq:lgeb}
\end{align}
where $h_i$ and $h_j$ are scalar node features, $\|\cdot\|_J^2$ and $\langle\cdot\rangle_J^2$ are norms and inner products computed with metric tensor $J$ and $\phi_e$ is a neural network. The tensor $J$ can be varied to enforce different symmetries, such as $\mathrm{diag}(1, -1, -1, -1)$ for $\mathrm O(1,3)$ and $\mathrm{Id}_n$ for $\mathrm E(n)$. Under this formulation, the input features for $\phi_e$ are group invariant scalars, which leads to equivariance of the entire architecture.

However, the selection of metric tensor $J$ relies on knowledge of the specific symmetry group, and the application of such equivariant models is restricted if no a priori knowledge about the symmetry is readily available. We show that the discovered symmetry from \model~can replace the requirement of theoretical knowledge through a simple procedure. First, we derive an equivalent relation between an arbitrary Lie group symmetry and its invariant metric tensor (see Appendix \ref{prf:metric} for proof).
\begin{restatable}{proposition}{gim}
Given a Lie algebra basis $\{L_i\in \mathbb R^{k\times k}\}_{i=1}^c$, $\eta(u,v)=u^TJv$ ($u,v\in\mathbb R^k,J\in\mathbb R^{k\times k}$) is invariant to infinitesimal transformations in the Lie group $G$ generated by $\{L_i\}_{i=1}^c$ if and only if $L_i^TJ+JL_i=0$ for $ i=1,2,...,c$. 
\label{prop:metric}
\end{restatable}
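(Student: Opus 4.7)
}
The plan is to translate the invariance condition on $\eta$ into a matrix equation, then linearize it near the identity. First I would unpack the invariance: $\eta$ is invariant under a transformation $g \in G$ acting as $u \mapsto gu$ exactly when $\eta(gu, gv) = u^T g^T J g v = u^T J v$ for all $u,v \in \mathbb{R}^k$. Since this must hold for every $u,v$, it is equivalent to the matrix identity $g^T J g = J$.

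Next, I would parameterize an infinitesimal transformation generated by $\{L_i\}$ as $g = \mathrm{Id} + \epsilon L$ where $L = \sum_i w_i L_i \in \mathfrak{g}$, and substitute into $g^T J g = J$:
\begin{equation*}
(\mathrm{Id} + \epsilon L^T)\, J\, (\mathrm{Id} + \epsilon L) = J + \epsilon\,(L^T J + J L) + \epsilon^2 L^T J L.
\end{equation*}
Invariance at first order in $\epsilon$ is therefore equivalent to the vanishing of the linear term, i.e.\ $L^T J + J L = 0$ for every $L \in \mathfrak{g}$. By linearity in $L$, this holds for every element of $\mathfrak{g}$ if and only if it holds for each basis element, giving the stated condition $L_i^T J + J L_i = 0$ for $i = 1, \ldots, c$. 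Both directions are then immediate from this single computation: the forward direction extracts the $O(\epsilon)$ coefficient, and the reverse direction observes that if the coefficient vanishes for each $L_i$ then it vanishes for every $L \in \mathfrak{g}$ and hence $g^T J g = J$ to first order.

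There is essentially no hard step here; the proof is a one-line expansion, and the only thing to be careful about is the quantifier structure, namely that the condition on $L_i$ must be checked on a spanning set of $\mathfrak{g}$ and then extended by linearity. If desired, I would also include a short remark noting that the infinitesimal condition in fact integrates to a finite one: for $g(t) = \exp(tL)$ one has $\tfrac{d}{dt}[g(t)^T J g(t)] = g(t)^T (L^T J + J L) g(t) = 0$, so $g(t)^T J g(t) \equiv J$ on the entire connected component of $G$ reached by the exponential map. This connects the proposition to the usage in Section \ref{sec:equiv-model}, where the learned metric $J$ is used to build exactly $G$-invariant edge features, not merely infinitesimally invariant ones.
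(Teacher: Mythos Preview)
Your proposal is correct and follows essentially the same approach as the paper: expand $g^T J g$ to first order around the identity and read off the vanishing condition on each basis generator. The only cosmetic differences are that the paper keeps the $u,v$ throughout rather than passing to the matrix identity $g^T J g = J$, and your added remark integrating the infinitesimal condition to a finite one via $\tfrac{d}{dt}[g(t)^T J g(t)] = 0$ is a useful addition not present in the paper.
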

This suggests that if we have the discovered a Lie algebra basis $\{L_i\}_{i=1}^c$, we can obtain the invariant metric tensor $J$ for the corresponding Lie group easily by solving a linear equation. However, directly solving this system gives a zero solution for $J$. Also, as the basis discovered by \model~inevitably has some numerical error, there may not be a nonzero solution. Taking these into consideration, we add a regularization term and optimize the following objective to get an approximation of ideal metric tensor
\begin{align}
    &\argmin_J \sum_{i=1}^c\|L_i^TJ+JL_i\|^2-a\cdot \|J\|^2 
\label{eq:solve-metric}
\end{align}
with $a>0$. 
The choice of regularization coefficient $a$ and the type of matrix norm can be flexible. A small push from zero is sufficient to get a reasonable metric $J$. With this approach, we can construct equivariant GNN for any discovered Lie group, which we refer to as LieGNN.

\section{Experiments}

We experiment on several tasks to demonstrate the capability of \model. Specifically, we aim to validate (1) whether \model\ can discover different types of symmetries mentioned in Table \ref{tab:comparison}; (2) whether the discovered symmetry, combined with existing models, can boost prediction performance.

\subsection{Baselines}
Direct comparison with other symmetry discovery methods is not always possible, since these works deal with different settings for discovery (see Table \ref{tab:comparison}). MSR \cite{msr} uses a largely different discovery scheme from ours and can only learn finite symmetry groups, so it is not included in the experiments. SymmetryGAN \cite{sgan} only learns an individual group element, which differs from our definition of symmetry discovery. We only include it in the first experiment to explain the difference. We mainly compare our method with Augerino \cite{augerino}, which also learns with Lie algebra representation. Augerino was originally developed for discovering a subset of a given group rather than an unknown symmetry group. We adapted Augerino from parameterizing the distribution over the given group to the distribution over the entire general linear group search space. Specifically, in the original Augerino forward function
\begin{equation}
    f_{\mathrm{aug-eq}}(x)=\mathbb{E}_{g\sim\mu}g^{-1}f(gx),
\end{equation}
we parameterize the distribution $\mu$ as in Equation \eqref{eq:lie-parameterization}. This provides ground for comparison between our method and theirs. To differentiate between this modified version with the original Augerino, we denote this approach as {Augerino+} in the following discussion.

Also, we incorporate the symmetry learned by the discovery algorithms into compatible models such as EMLP \cite{emlp} and LorentzNet \cite{lorentznet}. It should be noted that these prediction models are not directly comparable with our method since they use known symmetry whereas we focus on symmetry discovery. We combine them with \model\ to verify whether our learned symmetry representation leads to comparable prediction accuracy with the exact symmetry in theory.

\subsection{N-Body Trajectory}\label{sec:nbody}
We test our model as well as the baselines, Augerino and SymmetryGAN, on the simulated n-body trajectory dataset from Hamiltonian NN \cite{hnn}. It consists of the interdependent movements of multiple masses. We use a setting where two bodies with identical masses rotate around one another in nearly circular orbits. The task is to predict future movements based on the past series, which is rotational equivariant.
The input and output feature for each timestep has $4n$ dimensions, consisting of the positions and momentums of all bodies:
$
[q_{1x},q_{1y},p_{1x},p_{1y},...,q_{nx},q_{ny},p_{nx},p_{ny}]
$. The dataset and training details, as well as an alternative experiment setting with three bodies, are provided in Appendix \ref{sec:3body}.

We search for symmetries acting on the position and momentum of each mass separately, which induces a parameterization of $2\times2$ block diagonal matrix for the generator.

\begin{figure}[ht]
    \centering
    \begin{subfigure}{.15\textwidth}
        \includegraphics[width=\textwidth]{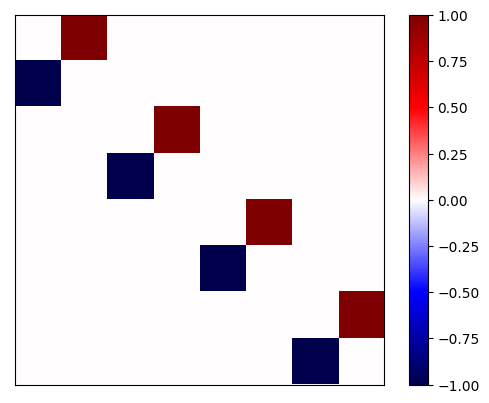}
        \caption{Ground truth}
    \end{subfigure}
    \hfill
    \begin{subfigure}{.15\textwidth}
        \includegraphics[width=\textwidth]{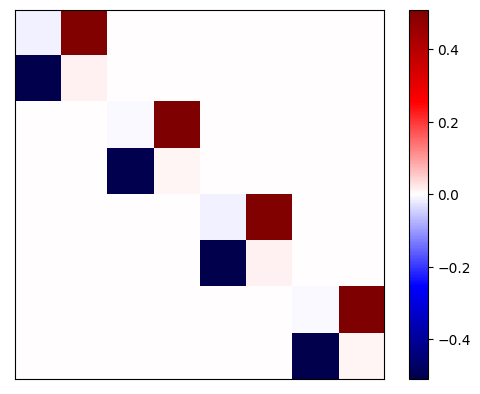}
        \caption{\model}
    \end{subfigure}
    \hfill
    \begin{subfigure}{.15\textwidth}
        \includegraphics[width=\textwidth]{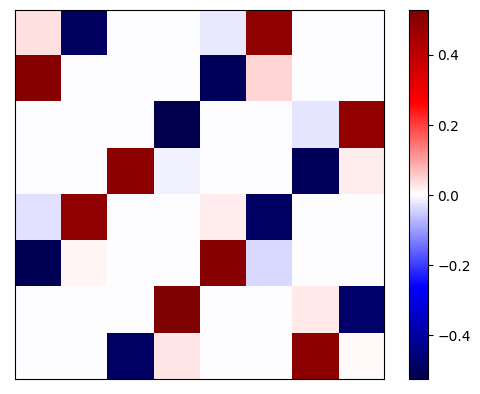}
        \caption{\model-ES}
        \label{fig:4x4}
    \end{subfigure}
    \bigskip
    \begin{subfigure}{.15\textwidth}
        \includegraphics[width=\textwidth]{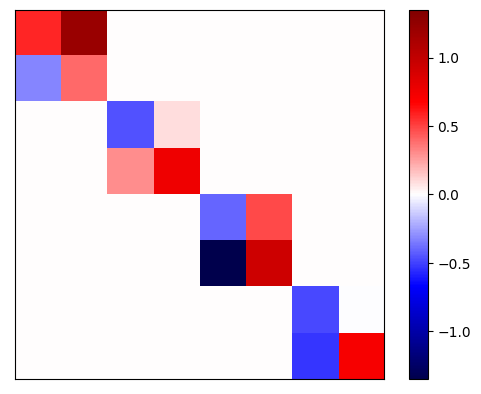}
        \caption{Augerino+}
    \end{subfigure}
    \begin{subfigure}{.15\textwidth}
        \includegraphics[width=\textwidth]{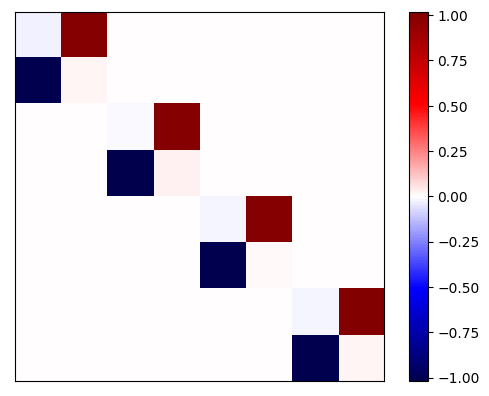}
        \caption{SymmetryGAN}
    \end{subfigure}
    \caption{Comparison between different methods on 2-body trajectory dataset. \model~discovers the correct rotation symmetry with both the original parameterization and the alternative one with expanded search space (\model-ES), whereas Augerino+ fails. SymmetryGAN only discovers one group element.}
    \label{fig:2body-vis}
\end{figure}
As is shown in Figure \ref{fig:2body-vis}, \model~can discover a symmetry that is nearly identical to ground truth, with a cosine correlation of 0.9998. We should note that the scale of the generator should not be taken into consideration when we compare different representations, because they are basis in the Lie algebra and are scale irrelevant. In contrast, Augerino+ only achieves a cosine similarity of 0.4880 with ground truth, which suggests that Augerino cannot be readily applied to discovering unknown groups.

On the other hand, SymmetryGAN \cite{sgan} produces a very similar visualization to ground truth symmetry. However, this result has a completely different interpretation. Instead of a Lie algebra generator that generates the entire group, SymmetryGAN is learning only one element of the group. In this case, it learns a rotation by $\frac{\pi}{2}$, which coincides with the Lie algebra generator.

In addition, we expand the symmetry search space of \model~to enable interactions between the position or momentum of different bodies. The result is shown in Figure \ref{fig:4x4}. Given that the origin is located at the center of mass and that the two bodies have the same mass, this can be viewed as another possible representation of the same rotation symmetry. Details of derivation for this result are included in Appendix \ref{prf:rot}.

\begin{table}[ht]
\caption{Test MSE loss of 2-body trajectory prediction. \model~and \model-ES correspond to different parameterizations of our model as is shown in Figure \ref{fig:2body-vis}. Symmetries from different discovery models and ground truth are inserted into EMLP or used to perform data augmentation. HNN is also included for camparison between equivariant models and model with other types of inductive bias.
    }
    \centering
    \begin{tabular}{c|cc}
        \toprule
        Model & EMLP & Data Aug. \\
        \midrule
        \model & 6.43e-5 & 3.79e-5 \\
        \model-ES & 2.41e-4 & 6.17e-5 \\
        Augerino+ & 9.41e-4 & 1.47e0 \\
        SymmetryGAN & - & 6.79e-4 \\
        Ground truth & \textbf{9.45e-6} & \textbf{1.39e-5} \\
        \midrule
        \midrule
        HNN & \multicolumn{2}{c}{3.63e-4} \\
        MLP & \multicolumn{2}{c}{8.49e-2} \\
        \bottomrule
    \end{tabular}
    
    \label{tab:2body-pred}
\end{table}

Besides the interpretation of the learned symmetry, we can also inject it into equivariant MLP or use it augment the training data. For prediction, The train and test datasets are constructed to have different distributions so that knowledge of symmetry would be useful for generalization. The results are shown in Table \ref{tab:2body-pred}. All experiments use the same configuration for MLP except for the introduced equivariance or data augmentation procedure. For Equivariant MLP, the two parameterizations of \model~outperform other symmetry discovery methods, approaching the performance of ground truth symmetry. MLP with no equivariance constraint can achieve lower training loss, but has trouble generalizing to a test set with the shifted distribution. For data augmentation, \model~can also achieve comparable accuracy to ground truth symmetry. SymmetryGAN only transforms the data by a fixed transformation, and its performance lies between continuous augmentation and no augmentation. 

\subsection{Synthetic Datasets}\label{sec:dr}
\begin{figure}[h!]
    \centering
    \begin{subfigure}{.18\textwidth}
        \includegraphics[width=\textwidth]{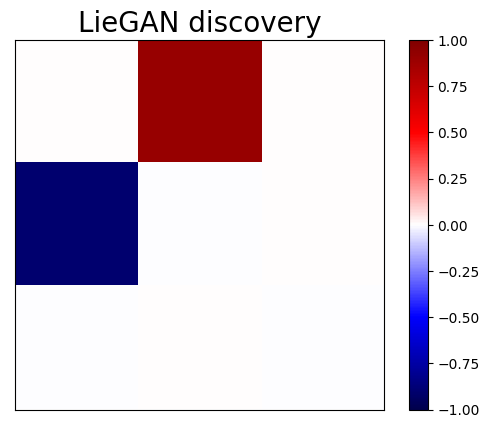}
        \caption{\model}
    \end{subfigure}
    \begin{subfigure}{.18\textwidth}
        \includegraphics[width=\textwidth]{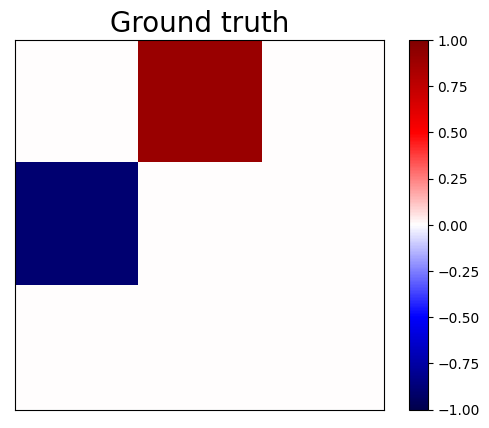}
        \caption{Ground truth}
    \end{subfigure}
    \bigskip
    \begin{subfigure}{.18\textwidth}
        \includegraphics[width=\textwidth]{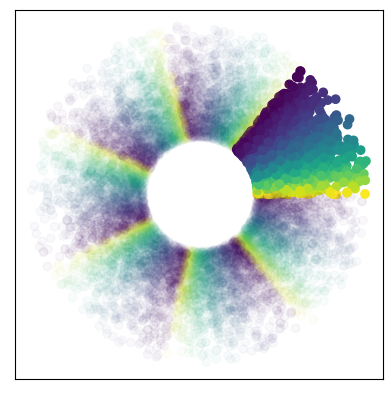}
        \caption{Original data}
    \end{subfigure}
    \begin{subfigure}{.18\textwidth}
        \includegraphics[width=\textwidth]{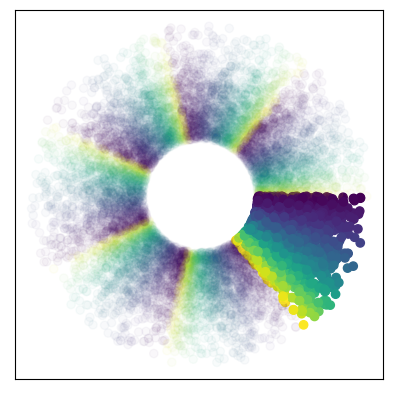}
        \caption{Transformed data}
    \end{subfigure}
    \caption{Result on the synthetic discrete rotation invariant task. (a-b): LieGAN discovers the correct rotation group and the correct scale of transformations. (c-d): Data distribution on $z=1$ plane. The color indicates the output function value. LieGAN leaves the overall data distribution unchanged while non-trivially rotating individual data points in the highlighted sector.}
    \label{fig:discrete-rotation}
\end{figure}

Next, we apply \model~to a synthetic regression problem given by $f(x,y,z)=z/(1+\arctan\frac{y}{x}\mod \frac{2\pi}{k})$. This function is invariant to rotations of a multiple of $2\pi/k$ in $xy$ plane, which form a discrete cyclic subgroup of $\mathrm{SO}(2)$ with a size of $k$. The goal is to demonstrate that our model can capture the symmetries of not only continuous Lie groups but also their discrete subgroups.

In this task, we fix the coefficient distribution to a uniform distribution on an integer grid of $[-10,10]$ to capture discrete symmetry. Figure \ref{fig:discrete-rotation} shows an example of \model~discovery in a dataset with $\mathrm{C}_7$ rotation symmetry. The discovered symmetry is almost identical to ground truth, with an MAE of 0.003. Unlike the previous case of continuous rotation, the scale of the basis matters because \model\ is modeling a set of discrete rotation symmetries with fixed angles. When acting on data, LieGAN leaves the overall data distribution unchanged while non-trivially transforms individual data points in the highlighted sector. \model\ discovers not only the rotation group but also the correct scale of transformations, which demonstrates its ability to learn a subgroup of an unknown group, which is yet another generalization from discovering the continuous symmetry of an entire Lie group.

Additional results on synthetic tasks can be found in Appendix \ref{sec:dr-more} and \ref{sec:syn-more}. For this rotation invariant task, we change the parameter $k$ to show that LieGAN can capture different discrete rotation groups. We also compare LieGAN with the baseline, SymmetryGAN, to demonstrate its advantage. Besides, other synthetic functions are designed to show that LieGAN can deal with various symmetry groups and can even work well on complex values.

\subsection{Top tagging}
We are also interested in finding symmetry groups with more complicated structures. For example, Lorentz group is an important set of transformations in many physics problems. It is a 6-dimensional Lie group with 4 connected components. While our method cannot be readily generalized to the problem of finding discrete generators, we can test whether it is capable of extracting the identity component of the Lorentz group, $\mathrm{SO}(1,3)^+$.
We use Top Quark Tagging Reference Dataset \cite{top-tagging} for discovering Lorentz symmetry, where the task is to classify between top quark jets and lighter quarks. There are 2M observations in total, each consisting the four-momentum of up to 200 particle jets. The classification task is Lorentz invariant, because a rotated or boosted input momentum should belong to the same category.

In this task, we set the generator to have up to 7 channels, which is slightly more than enough to capture the structure of 6-dimensional $\mathrm{SO}(1,3)^+$. We use cosine similarity as between-channel regularization function $l_\mathrm{chreg}$.

\begin{figure}[h]
    \centering
    \includegraphics[width=.5\textwidth]{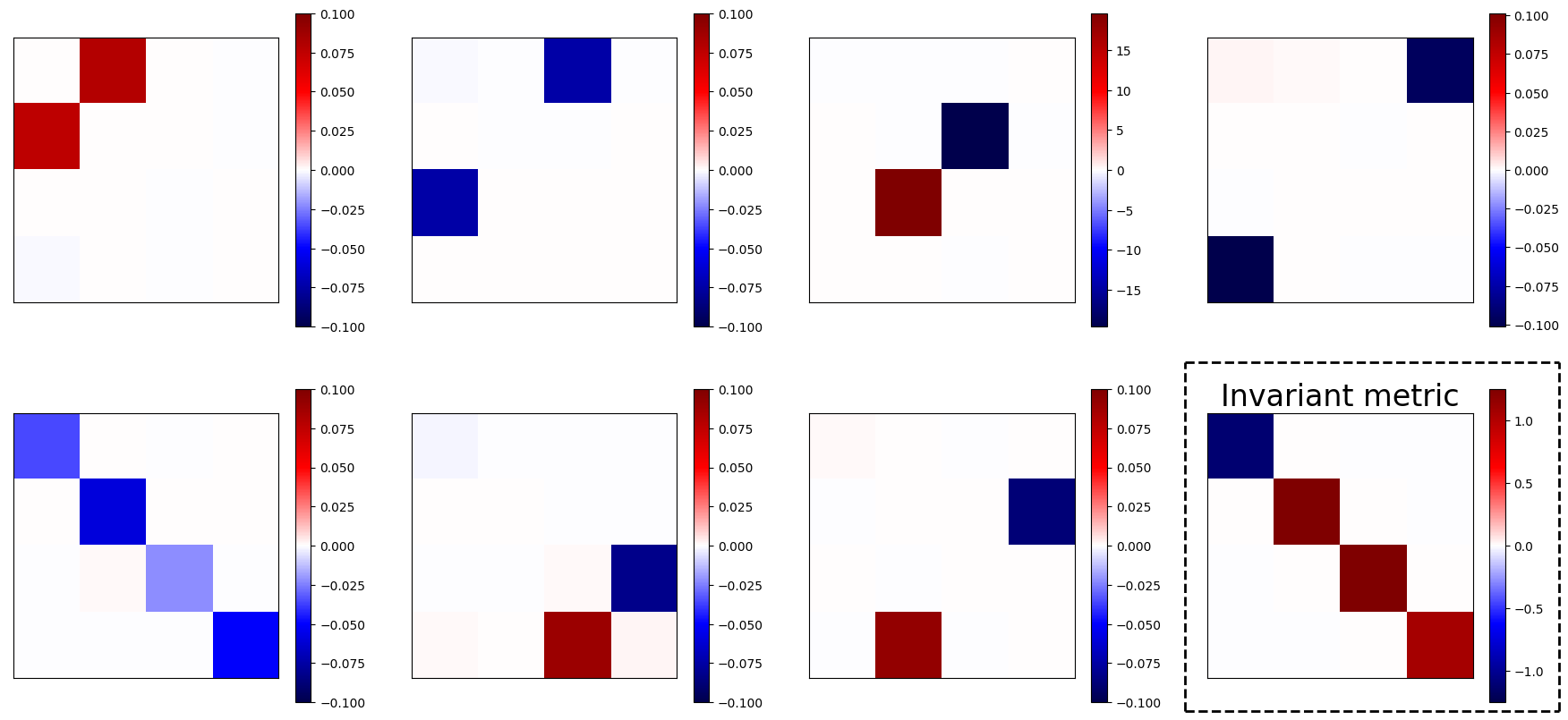}
    \caption{\model~discovers an approximate $\mathrm{SO}(1,3)^+$ symmetry in top tagging dataset, where channels 0, 1, 3 indicate boost along x-, y- and z-axis and channels 2, 5, 6 correspond to $\mathrm{SO}(3)$ rotation. Bottom-right: Computed invariant metric of the discovered symmetry by solving Equation \eqref{eq:solve-metric}.}
    \label{fig:tt-results}
\end{figure}

The discovery results are shown in Figure \ref{fig:tt-results}. The four dimensions in the matrix correspond to the 4-momentum $(E/c,p_x,p_y,p_z)$. \model\ is successful in recovering the $\mathrm{SO}(1,3)^+$ group. Its channels 2, 5, 6 correspond to $\mathrm{SO}(3)$ rotation, and channels 0, 1, 3 indicate boost along x-, y- and z-axis. In addition, the generator learns an additional Lie algebra element that scales different input dimensions with approximately the same amounts.

\begin{figure}[h]
    \centering
    \begin{subfigure}{.23\textwidth}
        \includegraphics[width=\textwidth]{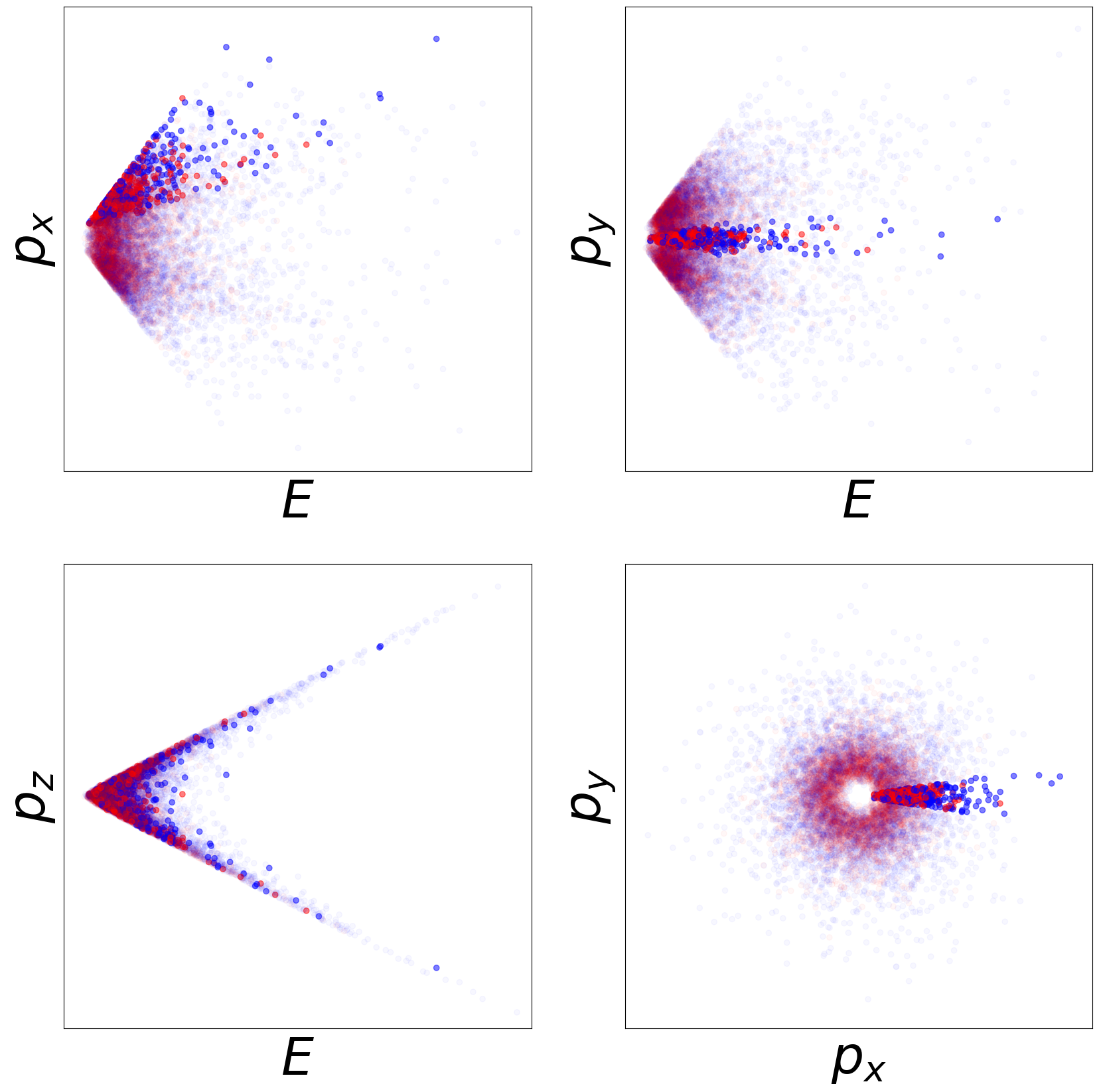}
        \caption{Original}
    \end{subfigure}
    \begin{subfigure}{.23\textwidth}
        \includegraphics[width=\textwidth]{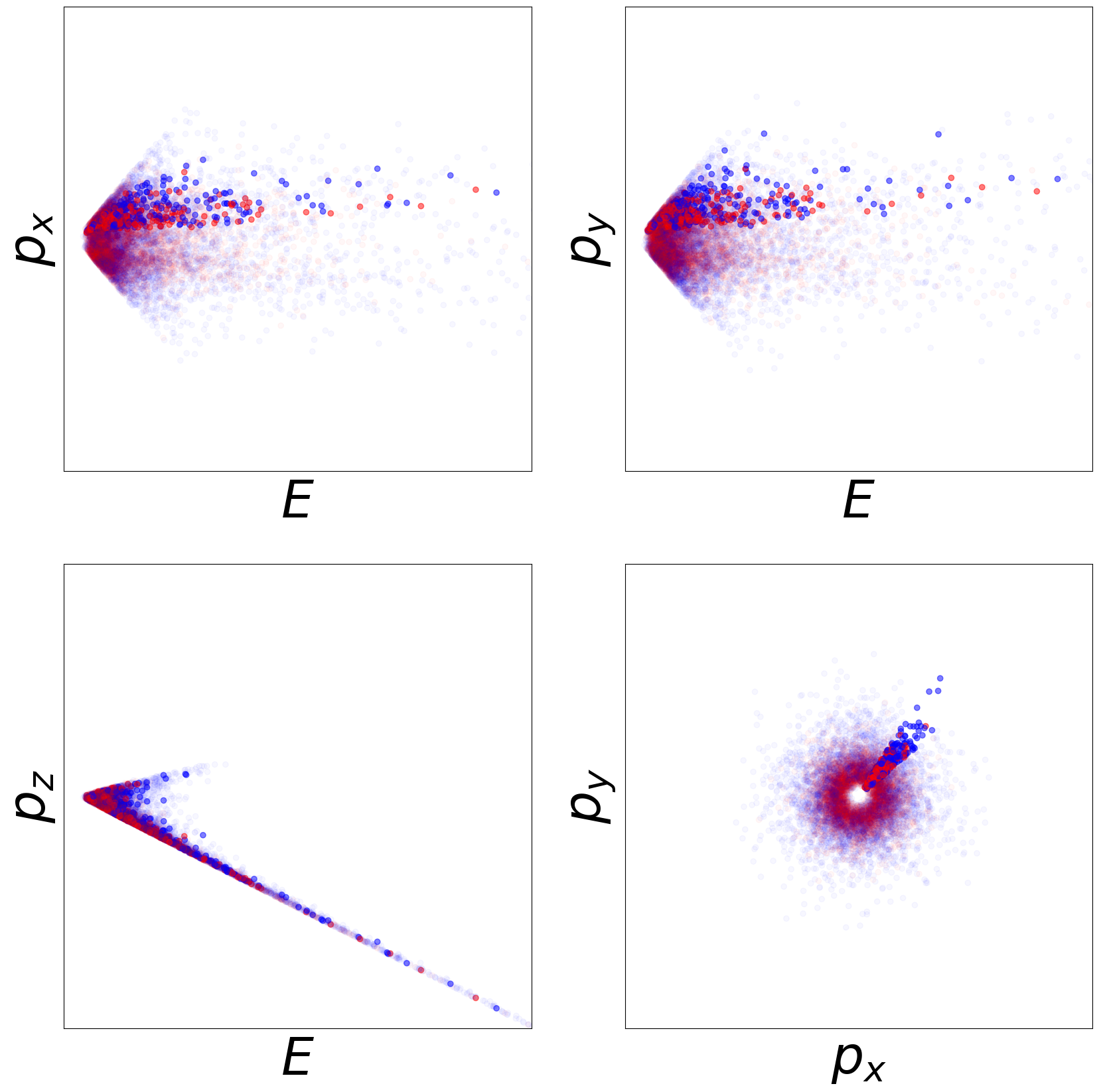}
        \caption{Transformed}
    \end{subfigure}
    \caption{The data distribution before and after the LieGAN transformations. The overall distribution remains unchanged, while the highlighted data points are non-trivially transformed.}
    \label{fig:tt-dist}
\end{figure}

Besides, figure \ref{fig:tt-dist} visualizes the distribution of the leading jet component in each event before and after LieGAN transformations. For better demonstration, four 2D marginal distributions of $(E,p_x),(E,p_y),(E,p_z),(p_x,p_y)$ are plotted. The overall distribution remains unchanged, while the data points in the highlighted portions are rotated and boosted to new locations. These results suggest that \model\ is capable of discovering high-dimensional Lie groups and also decoupling the group structure to a simple and interpretable representation of Lie algebra basis. 

\begin{table}[]
\caption{Test accuracy and AUROC on top tagging. Our proposed model, LieGNN, reaches the performance with LorentzNet which explicitly encodes Lorentz symmetry. The result of non-equivariant GNN (LorentzNet (w/o)) and EGNN is from \citet{lorentznet}.
    }
    \centering
    \begin{tabular}{c|c|c}
        \toprule
        Model & Accuracy & AUROC \\
        \midrule
        LorentzNet & \textbf{0.940} & \textbf{0.9857} \\
        LieGNN & 0.938 & 0.9848 \\
        LorentzNet (w/o) & 0.934 & 0.9832 \\
        EGNN & 0.922 & 0.9760 \\
        \bottomrule
    \end{tabular} 
    \label{tab:tt-pred}
\end{table}
It is also possible to inject this knowledge of Lie group symmetry into existing prediction models. Following the guideline in Section \ref{sec:equiv-model}, we compute the invariant metric of the discovered symmetry (Figure \ref{fig:tt-results} bottom-right), which is almost identical to the true Minkowski metric, with a cosine correlation of $-0.9975$. The computed metric is used to construct the LieGNN equivariant to the discovered group. Table \ref{tab:tt-pred} shows the prediction results. Without requiring any prior knowledge, LieGNN with the metric derived from \model\ discovery reaches the performance of LorentzNet \cite{lorentznet} with the true Minkowski metric.

\section{Conclusion}
In this paper, we present a method of discovering symmetry from training dataset alone with generative adversarial network. Our proposed framework addresses the discovery of various symmetries, including continuous Lie group symmetries and discrete subgroup symmetries, which is a significant step forward compared to existing symmetry discovery methods with relatively narrow search space for symmetry. We also develop pipelines for utilizing the learned symmetry in downstream prediction tasks through equivariant model and data augmentation, which proves to improve prediction performance on a variety of datasets.

This work currently deals with global symmetry of subgroups of general linear groups. However, it is also possible to apply this framework to more general scenario of symmetry discovery, such as non-connected Lie group symmetry, nonlinear symmetry and gauge symmetry, by replacing the simple linear transformation generator in \model~with more sophisticated structure. For instance, nonlinear symmetry could possibly be found by adding layers in generator to project the input to a space with linear symmetry.

Moreover, \model~shows tremendous potential in its application to supervised prediction tasks, which suggests that automatic symmetry discovery methods may eventually replace the need of human prior knowledge about symmetry. However, this ultimate vision can be fully realized only if equivariant neural network models can be implemented for more general choices of symmetry groups rather than a few specific symmetries. We have demonstrated in this work how to incorporate the discovered symmetry into some equivariant models including EMLP and EGNN, which we hope could inspire further exploration in this topic.

\section*{Acknowledgement}
This work was supported in part by the U.S. Department Of Energy, Office of Science, U. S. Army Research Office
under Grant W911NF-20-1-0334, Google Faculty Award, Amazon Research Award, and NSF Grants \#2134274, \#2107256 and \#2134178.

\bibliography{ref}
\bibliographystyle{icml2022}

\newpage
\appendix
\onecolumn


\section{Proofs}
\subsection{Optimizng the GAN loss function}\label{prf:gan}
We show in this section how optimizing the GAN loss function can lead to proper symmetry discovery. We assume in the first place the existence of a symmetry group and derive the properties of loss when the generator learns this group or its subgroup. We use the definition of perfect symmetry, that is, $gf(x)=f(gx)$ or $P_d(gf(x)|gx)=1$ for symmetry group elements $g$. We use $p_d$ and $p_{gen}$ to denote the original distribution of data and the generated distribution.

\begin{assumption}\label{assum:existence}
There exists a maximal subgroup of $GL(n)$, denoted as $G^*$, which $y=f(x)$ is equivariant to. That is, $\forall g\in G^*,gy=f(gx)$; $\forall g\in GL(n)\backslash G^*,p_d(gy\not=f(gx))>0.$
\end{assumption}

\strong*

\begin{proof}
Revisiting Eq \eqref{eq:p_gen}, the generated distribution is given by
\begin{align}
    p_{gen}(x,y)=&\int_{G^*}\mu(g)p_d(g^{-1}x)p_d(g^{-1}y|g^{-1}x)/|\rho_\mathcal X (g)||\rho_\mathcal{Y}(g)|dg
\end{align}
If $p_d(x)$ is proportionally distributed along each orbit of $G^*$-action on $\mathcal X$, then
\begin{align}
    p_{gen}(x,y)=&\int_{G^*}\mu(g)p_d(x)p_d(g^{-1}y|g^{-1}x)dg
\end{align}

For any group element $g\in G^*$, $p_d(y|x)=p_d(g^{-1}y|g^{-1}x)$. Therefore,
\begin{align}
    p_{gen}(x,y)=&(\int_{G^*}\mu(g)dg)p_d(x)p_d(y|x)\\
    =&p_d(x,y)
\end{align}

As this equality holds for all $(x,y)$, we have zero divergence between these two distributions, $p_d$ and $p_{gen}$.
\end{proof}
While this distribution condition is often not satisfied in practice, we further show that under certain assumptions on data and an ideal discriminator, a nontrivial Lie subgroup of the true symmetry group corresponds to a local minimum of generator loss function.

\begin{assumption}\label{assum:neg}
For each datapoint from the original distribution, transformations outside that maximal subgroup $G_T$ on it would not produce a valid datapoint. Formally, denoting $\bar{G^*}=GL(n)\backslash G^*$, $\int_{\bar{G^*}} \mu(g)p_d(gy|gx)dg=0$. (While there might be slim chances that $gf(x)=f(gx)$ for some $g\in\bar{G^*}$, the integration can still yield zero as long as we parameterize $\mu(g)$ with good properties.) 
\end{assumption}

\begin{assumption}\label{assum:uniformity}
For each orbit $[x]$ of $G^*$ with $P_d([x])>0$, $\exists x_0\in[x],c>0,m>0$ s.t. $\forall {g\in\delta_0(m)}$, $ p_d(gx_0)\geq c$, $ p_d(g^2x_0)\geq c$, and $V(g)=|\rho_\mathcal X(g)||\rho_\mathcal Y(g)|\in (v_m, V_m)$, where $\delta_0(m)$ is a neighborhood of $\mathrm{id}$ with $P_\mu(\delta_0(m))=m$ and $V_m>v_m>0$ are constants depending on $m$.
\end{assumption}

This is actually a much more relaxed version of distribution constraint along the group action orbits in Theorem \ref{thm:strong}, which may be unrealistic. We assume instead that there exists a continuous neighborhood in each orbit where the density of $x$ is above some threshold.

\weak*

\begin{proof} As an established result in GAN, the optimal discriminator for the loss function \eqref{eq:gan_loss} is
\begin{equation}
    D^*(x,y)=\frac{p_d(x,y)}{p_d(x,y)+p_{gen}(x,y)} \label{eq:optimal_D}
\end{equation}
Substituting \eqref{eq:optimal_D} into \eqref{eq:gan_loss}, we get
\begin{align}
    L(\Phi,D^*)=&\int p_d(x,y)\log\frac{p_d(x,y)}{p_d(x,y)+p_{gen}(x,y)}+p_{gen}(x,y)\log\frac{p_{gen}(x,y)}{p_d(x,y)+p_{gen}(x,y)}dxdy\\
    =&\int_{p_d(x,y)\not=0}p_d(x,y)\log\frac{p_d(x,y)}{p_d(x,y)+p_{gen}(x,y)}+p_{gen}(x,y)\log\frac{p_{gen}(x,y)}{p_d(x,y)+p_{gen}(x,y)}dxdy\label{eq:loss_under_D*}
\end{align}
where, denoting $\tilde\mu(g)=\mu(g)/|\rho_\mathcal X (g)||\rho_\mathcal{Y}(g)|$,
\begin{equation}
    p_{gen}(x,y)=\int_g\tilde\mu(g)p_d(g^{-1}x)p_d(g^{-1}y|g^{-1}x)dg
\end{equation}
Because the Haar measure $dg$ is invariant to inversion, we have
\begin{equation}
    p_{gen}(x,y)=\int_g\tilde\mu(g^{-1})p_d(gx)p_d(gy|gx)dg
\end{equation}
In practice, we use Gaussian distribution for $\mu(g)$, which assigns the same probability for a group element and its inverse. (This is also true for many other common choices of distribution, such as uniform distribution centered at origin.) Therefore, denoting $V(g)=|\rho_\mathcal X(g)||\rho_\mathcal Y(g)|$,
\begin{align}
    p_{gen}(x,y)=&\int_g\mu(g)p_d(gx)p_d(gy|gx)|\rho_\mathcal X(g)||\rho_\mathcal Y(g)|dg\\
    =&\int_g\mu(g)p_d(gx)p_d(gy|gx)V(g)dg
\end{align}
It is easy to show that the Lie group generated by the intersection of two Lie algebras coincides with the intersection of Lie groups generated by these two Lie algebras, respectively. Therefore, as $\mathfrak{g}_2\cap\mathfrak{g}_T=\{\mathrm{\mathbf0}\}$, $G_2\cap G_T=\{\mathrm{id}\}$.
According to Assumption 2,
\begin{align}
    p_{gen}(x,y;G_2)=&\int_g\mu(g)p_d(gx)p_d(gy|gx)V(g)dg\\
    =&\int_{g\in\delta_0(1-\eta)}\mu(g)p_d(gx)p_d(gy|gx)V(g)dg+\cr
    &\int_{g\notin\delta_0(1-\eta)}\mu(g)p_d(gx)p_d(gy|gx)V(g)dg\\
    \leq&\int_{g\in\delta_0(1-\eta)}M\mu(g)p_d(gy|gx)dg+\cr
    &\int_{g\notin\delta_0(1-\eta)}\mu(g)p_d(gx)V(g)dg\\
    =&0+\int_{g\notin\delta_0(1-\eta)}\mu(g)p_d(gx)V(g)dg
\end{align}
where, following the notations in Assumption 3, $\delta_0(1-\eta)$ is the neighborhood of $\mathrm{id}$, $P_\mu(\delta_0(1-\eta))=1-\eta$, and $V(g)\leq V_{1-\eta}$. Therefore, there exists an upper bound $M=\max_{g\in\delta_0(1-\eta)}p_d(gx)V(g)$.

For the integral on $g\notin\delta_0(1-\eta)$, as the Gaussian density $\mu(g)$ decays exponentially with $V(g)$ and $p_d(gx)$ has an upper bound, $\forall\epsilon>0,\exists\eta$ s.t. $\int_{g\notin\delta_0(1-\eta)}\mu(g)p_d(gx)V(g)dg<\epsilon$.

Therefore, $p_{gen}(x,y;G_2)=0$ and $L(\mathfrak{g}_2,D^*)=0$.

On the other hand, $\mathfrak{g}_1\cap\mathfrak{g}^*\not=\{\mathrm{\mathbf0}\}\Rightarrow G_1\cap G^*\not=\{\mathrm{id}\}$. We consider the integral \eqref{eq:loss_under_D*} along each possible orbit of $G^*$. According to Assumption 3, there exists an x-neighborhood $X=\delta_0(m)x_0$ s.t. $\forall x\in X,p_d(x,f(x))>c$. For the generated distribution on this neighborhood, we have
\begin{align}
    p_{gen}(x,f(x))=&\int_g\mu(g)p_d(gx)V(g)dg\\
    \geq& \int_{g\in\delta_0(m)}\mu(g)p_d(gx)V(g)dg\\
    \geq& \int_{g\in\delta_0(m)}\mu(g)cv_mdg\\
    =&mcv_m>0
\end{align}

As the supports of $p_d$ and $p_{gen}$ overlap on this neighborhood, we have $L(\mathfrak{g}_1,D^*)<0=L(\mathfrak{g}_2,D^*)$.
\end{proof}

\subsection{Experiment Result on 2-Body Trajectory Dataset}\label{prf:rot}
In Figure \ref{fig:4x4}, we observe an unfamiliar symmetry representation. In fact, this is another possible representation for rotation symmetry. The learned Lie algebra basis $L$ can be expressed in the following form after discarding the noise:
\begin{align*}
    R&=\begin{bmatrix}
0 & -1\\
1 & 0
\end{bmatrix}\\
L&=\begin{bmatrix}
R & & -R & \\
 & R & & -R\\
-R & & R & \\
 & -R & & R
\end{bmatrix}
\end{align*}
Computing the matrix exponential gives
\begin{align*}
    \exp(\theta L)&=\begin{bmatrix}
L(\theta) & & -L(\theta) & \\
& L(\theta) & & -L(\theta) \\
-L(\theta) & & L(\theta) & \\
& -L(\theta) & & L(\theta) \\
\end{bmatrix}+I\\
L(\theta)&=\sum_{k=0}^{+\infty}\frac{2^{2k}(-1)^k\theta^{2k+1}}{(2k+1)!}R+\sum_{k=1}^{+\infty}\frac{2^{2k-1}(-1)^{k}\theta^{2k}}{(2k)!}I
\end{align*}
As the origin for this dataset is at the center of mass and $m_1=m_2$, we have $\bold q_1=-\bold q_2$ and $\bold p_1=-\bold p_2$. Therefore,
\begin{align*}
    \exp(\theta L)\begin{bmatrix}
\bold q_1 \\ \bold p_1 \\ \bold q_2 \\ \bold p_2
\end{bmatrix}&=\mathrm{diag}(I+2L(\theta))\begin{bmatrix}
\bold q_1 \\ \bold p_1 \\ \bold q_2 \\ \bold p_2
\end{bmatrix}\\
I+2L(\theta)&=\sum_{k=0}^{+\infty}\frac{2^{2k+1}(-1)^k\theta^{2k+1}}{(2k+1)!}R+\sum_{k=0}^{+\infty}\frac{2^{2k}(-1)^{k}\theta^{2k}}{(2k)!}I\\
&=\begin{bmatrix}
\cos2\theta & -\sin2\theta\\
\sin2\theta & \cos2\theta
\end{bmatrix}
\end{align*}
which indicates that this is another representation for rotation specific to this dataset.

\subsection{Computing Group Invariant Metric Tensor}\label{prf:metric}
\gim*
\begin{proof}
An infinitesimal transformation in group $G$ generated by the given Lie algebra basis can be written as the matrix representation
$g=I+\sum_i\epsilon_iL_i$.
\begin{align}
&\eta(u,v)=\eta(gu,gv)\\
\Longleftrightarrow & u^TJv=u^Tg^TJgv\\
\Longleftrightarrow & u^T(I+\sum_i\epsilon_iL_i^T)J(I+\sum_i\epsilon_iL_i)v=u^TJv\\
\Longleftrightarrow & u^TJv+u^T(\sum_i\epsilon_i(L_i^TJ+JL_i))v+O(\epsilon^2)=u^TJv\\
\Longleftrightarrow & u^T(\sum_i\epsilon_i(L_i^TJ+JL_i))v=0, \forall u,v\in\mathbb R^n
\end{align}
As this holds for any infinitesimal transformation $g$, we can set $\epsilon_{-i}=0$ to get
\begin{equation}
    \epsilon_i(L_i^TJ+JL_i)=0,i=1,2,...,c
\end{equation}
Therefore, $L_i^TJ+JL_i=0,i=1,2,...,c$.

On the other hand, if $L_i^TJ+JL_i=0,i=1,2,...,c$, then
\begin{equation}
    \sum_i\epsilon_i(L_i^TJ+JL_i)=0,\forall \epsilon\in\mathbb R^c
\end{equation}
\end{proof}

\newpage
\section{Experiment Details}
This section provides detailed explanation on the experiment settings, including the dataset generating procedure, the hyperparameters used in training, etc.
\subsection{N-Body Trajectory}\label{sec:nbody-detail}
We use the code from Hamiltonian Neural Networks \footnote{\href{https://github.com/greydanus/hamiltonian-nn}{https://github.com/greydanus/hamiltonian-nn}} \cite{hnn} to generate the dataset for this task. We construct the train and test sets with different distributions to test the generalization ability of the models. Specifically, we sort the samples in terms of the polar angle of the position of the first particle at the starting timestep of trajectory, and divide the sorted dataset into train and test sets.

The task for this dataset is to predict $K$ future timesteps of 2-body movement based on $P$ past timesteps of observation, where the feature for each timestep has 8 dimensions, consisting of the positions and momentums of two bodies: $[q_{1x},q_{1y},p_{1x},p_{1y},q_{2x},q_{2y},p_{2x},p_{2y}]$. In our experiment, we set $P=K=5$. When discovering symmetry, the \model~generator takes both the past observations and future predictions as input, yielding an input dimension of 80. The generator transforms each timestep at the same time, which means that it is learning a group representation of $\mathbb R^{8\times 8}$ that acts simultaneously on each of the past input and future output timesteps. On the other hand, we use a 3-layer MLP as with discriminator, with input dimension 80, hidden dimension 512, and leaky ReLU activation with negative slope 0.2. We use only the regularization against identical transformations, i.e. $l_\mathrm{reg}(x,y)$ in \eqref{eq:reg},  with the regularization coefficient $\lambda=1$. but not the between-channel regularization in \eqref{eq:chreg}, because the generator only has a single channel and there is no need for it. The learning rates for the discriminator and the generator are set to 0.0002 and 0.001, respectively. \model~is trained adversarially for 100 epochs.

In the prediction task with equivariant model, we use EMLP with 3 hidden layers and a hidden representation of $5V$, where $V$ stands for an 8-dimensional vector just as the feature for each timestep. We train all EMLPs constructed with different equivariances with lr=0.0001 for 5000 epochs.

\begin{figure}[h]
    \centering
    \includegraphics[width=.6\textwidth]{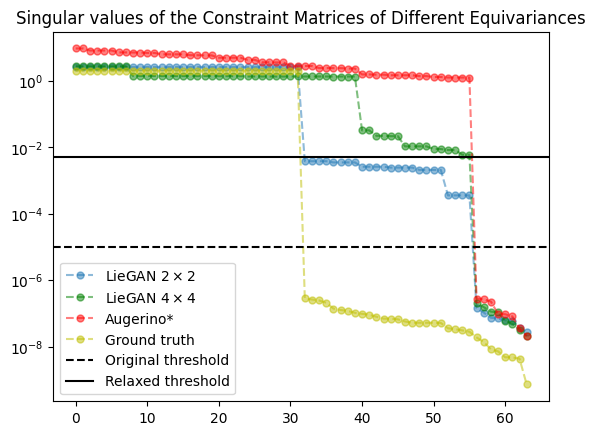}
    \caption{Singular values of the EMLP constraint matrices derived from different equivariances under the representation of group actions on weight matrices mapping from $V_1\rightarrow V_2$, where $V_1$ and $V_2$ are both 8-dimensional vector spaces. The $y$ axis is log-scaled for better visualization. It can be observed that the singular values of the constraint matrix corresponding to the symmetry discovered by \model~have a sharp decrease at the same position as the matrix for ground truth symmetry. This suggests that we can slightly relax the singular value threshold to obtain a higher dimensional equivariant subspace.}
    \label{fig:svd}
\end{figure}
As is mentioned in Section \ref{sec:equiv-model}, we slightly modified the EMLP implementation to adapt to the noised discovery result from \model. EMLP projects the network weight to an equivariant subspace, which is the null space of the constraint matrix derived from the provided equivariance and input and output representations. The null space is computed with SVD. This method usually works for common handpicked symmetries, such as Euclidean group and Lorentz group, which typically have sparse and clean matrix representations. However, the symmetry discovered by \model\ inevitably has some numerical error. While such error could be largely negligible when we visualize the discovered symmetry or use it for data augmentation, it will cause problem in the SVD procedure in the EMLP implementation. Even a small noise that changes a matrix representation entry from zero to small nonzero values could result in a constraint matrix with higher rank, which then leads to a lower dimensional equivariant subspace and a lower rank weight matrix. However, we can raise the singular value threshold to larger values to calculate an approximate null space, which has higher dimensions. Figure \ref{fig:svd} shows how we modify the singular value threshold. The original EMLP implementation sets a threshold of 1e-5. With this threshold, the symmetries discovered by \model~lead to a weight matrix that maps each input vector to each hidden vector with a rank of 8, significantly lower than 32, which is the case for ground truth rotation symmetry. However, it can also be observed that the singular values of the constraint matrix corresponding to \model~symmetry have a sudden fall at the same position as the matrix for ground truth symmetry. Therefore, we can raise the singular value threshold to 5e-3, which is still reasonably small, and obtain a 32-dimensional approximately equivariant subspace for the discovered symmetry. This procedure proves to significantly improve prediction performance for EMLP constructed with the discovered symmetry.

\subsection{Synthetic Regression}
This is a regression problem given by $f(x,y,z)=z/(1+\arctan\frac{y}{x}\mod \frac{2\pi}{k})$. This function is invariant to rotations of a multiple of $2\pi/k$ in $xy$ plane, which form a discrete cyclic subgroup of $\mathrm{SO}(2)$ with a size of $k$. In our experiment, we construct the dataset with $k=7$. We randomly sample 20000 inputs $(x,y,z)$ from a standard multivariate Gaussian distribution and calculates the output analytically. For symmetry discovery, we use a generator with a single channel of $\mathbb R^{3\times 3}$ matrix representation and a 3-layer MLP discriminator with input dimension 4 (which is $(x,y,z,f)$), hidden dimension 512, and leaky ReLU activation with negative slope 0.2. The coefficient distribution in the generator is set to a uniform distribution on integer grid between $[-10,10]$. We use regularization term $l_\mathrm{reg}$ with coefficient $\lambda=0.01$. The learning rates for the discriminator and the generator are set to 0.0002 and 0.001, respectively. \model~is trained for 100 epochs.

\subsection{Top Quark Tagging}
For symmetry discovery, we use a generator with 7 channels of $\mathbb R^{4\times4}$ matrix representations acting on the input 4-momenta $(E/c, p_x, p_y, p_z)$. The input consists of the momenta of up to 200 constituents for each sample, sorted by the transverse momentum of each constituents. We truncate the input to the momenta of the two leading constituents, which gives an input dimension of 8. As this classification task is invariant, the generator does not change the category label associated with each sample. The discriminator takes both the transformed input momenta $gx$ and the output label $gy=y$ as its input. It first transforms $y$ to a real-valued vector with an embedding layer, and then concatenates the embedding with $gx$, and passes them through a 3-layer MLP with hidden dimension 512 and leaky ReLU activation with negative slope 0.2. We use regularizations $l_\mathrm{reg}$ with coefficient $\lambda=1$ and $l_\mathrm{chreg}$ with coefficient $\eta=0.1$. The learning rates for the discriminator and the generator are set to 0.0002 and 0.001, respectively. \model~is trained for 100 epochs.

For prediction with LieGNN, we first calculate the invariant metric tensor based on the discovered symmetry according to Equation \eqref{eq:solve-metric}. We optimize the objective with $a=0.0005$ and matrix max norm using a gradient descent optimizer with step size of $1\times10^{-5}$. Then, we build the LieGNN prediction model based on LorentzNet implementation \footnote{\href{https://github.com/sdogsq/LorentzNet-release}{https://github.com/sdogsq/LorentzNet-release}}. The model has 6 group equivariant blocks with 72 hidden dimensions. We use a dropout rate of 0.2 and weight decay rate of 0.01. The model is trained for 35 epochs with a learning rate of 0.0003. These settings are the same for LorentzNet and LieGNN.

\newpage
\section{Additional Experiments}
\subsection{N-Body Trajectory}\label{sec:3body}
We extend the 2-body setting in Section \ref{sec:nbody} to 3-body movements. Despite the increased complexity, \model\ is still able to discover the rotation symmetry in this case, as is shown in Figure \ref{fig:3body-discovery}.
\begin{figure}[h]
    \centering
    \includegraphics[width=.35\textwidth]{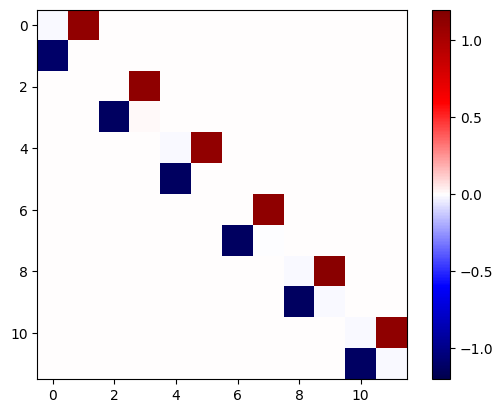}
    \caption{Symmetry discovery result on 3-body trajectory prediction dataset. \model\ can also learn an accurate representation of rotation symmetry as in the case of 2-body dataset.}
    \label{fig:3body-discovery}
\end{figure}

\subsection{Synthetic Regression}\label{sec:dr-more}
Consider the function $f(x,y,z)=z/(1+\arctan\frac{y}{x}\mod \frac{2\pi}{k})$ introduced in Section \ref{sec:dr}. We use different values of $k$ to construct functions that are invariant to different groups. Table \ref{tab:discrete-rotation-more-k} shows the results for $k=6,7,8$, corresponding to the cyclic groups $\mathrm{C}_6,\mathrm{C}_7,\mathrm{C}_8$. LieGAN successfully captures these discrete rotation groups. SymmetryGAN works fine in some cases, but its convergence heavily depends on random initialization. For example, it does not converge for $k=8$ and converges to a non-generator element $R(4\pi/3)$ for $k=6$.
\begin{table}[ht]
    \centering
    \begin{tabular}{c|ccc}
    \hline
        $k$ & 6 & 7 & 8 \\
    \hline
        LieGAN & 0.012 & 0.003 & 0.011 \\
        SymmetryGAN & 0.024* & 0.034 & N/A\\
    \hline
    \end{tabular}
    \caption{Mean absolute error between the discovered symmetry representations and ground truths.}
    \label{tab:discrete-rotation-more-k}
\end{table}

\subsection{More Synthetic Tasks}\label{sec:syn-more}
\paragraph{Partial permutation symmetry.} Consider the function $f(x)=x_1+x_2+x_3+x_4^2-x_5^2,\ x\in\mathbb R^5$. It has partial permutation symmetry, i.e. the output stays the same if we permute the first 3 dimensions of $x$, but it will change if we also permute the last 2 dimensions. As permutation is a discrete symmetry, we set the coefficient distribution to a uniform distribution on an integer grid, similar to the setting in Section \ref{sec:dr}.
\begin{figure}[h]
    \centering
    \includegraphics[width=.75\textwidth]{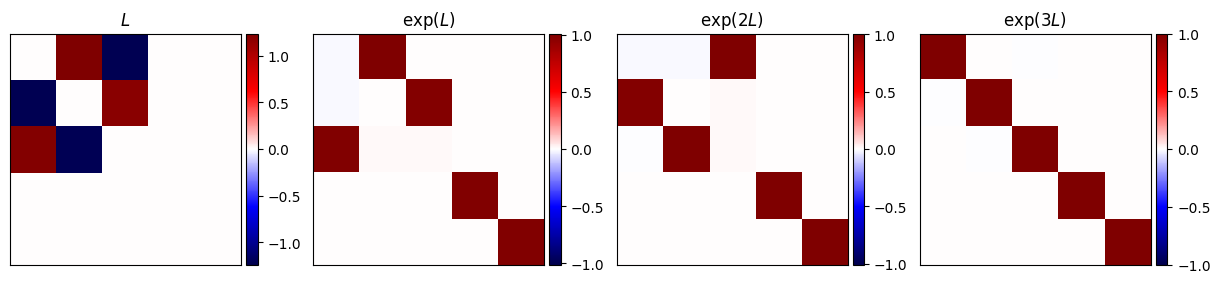}
    \caption{Discovered partial permutation symmetry.}
    \label{fig:perm}
\end{figure}

Figure \ref{fig:perm} shows the discovery result. The LieGAN generator exactly matches the ground truth,
$$
L_\text{truth}=\log(\begin{bmatrix}
    0 & 1 & 0 & 0 & 0 \\
    0 & 0 & 1 & 0 & 0 \\
    1 & 0 & 0 & 0 & 0 \\
    0 & 0 & 0 & 1 & 0 \\
    0 & 0 & 0 & 0 & 1
\end{bmatrix})\approx1.21\times\begin{bmatrix}
    0 & 1 & -1 & 0 & 0 \\
    -1 & 0 & 1 & 0 & 0 \\
    1 & -1 & 0 & 0 & 0 \\
    0 & 0 & 0 & 0 & 0 \\
    0 & 0 & 0 & 0 & 0
\end{bmatrix},
$$
with $\mathrm{MAE}=0.003$. The figure also shows that when we compute the exponential of $L$, $2L$ and $3L$, we get the permutations $(123),\ (132)$ and $\mathrm{id}$.

\paragraph{$\mathrm{SU}(2)$ symmetry.} In this example, we show that LieGAN can also work on complex-valued tasks. Consider the function $f(x,y)=\frac{1}{2}(x_1y_2-x_2y_1)^2+(x_1y_2-x_2y_1),\ x,y\in\mathbb C^2$. Such holomorphic functions are referred to as superpotentials which are relevant to supersymmetric field theories \cite{krippendorf2020detecting}. We want to find a complex Lie algebra representation, i.e. $\{L_i\in\mathbb C^{2\times2}\}_{i=1}^c$, that acts on the inputs, $x$ and $y$, simultaneously. The true underlying invariance here is the special unitary group, $\mathrm{SU}(2)$.
\begin{figure}[h]
    \centering
    \includegraphics[width=.65\textwidth]{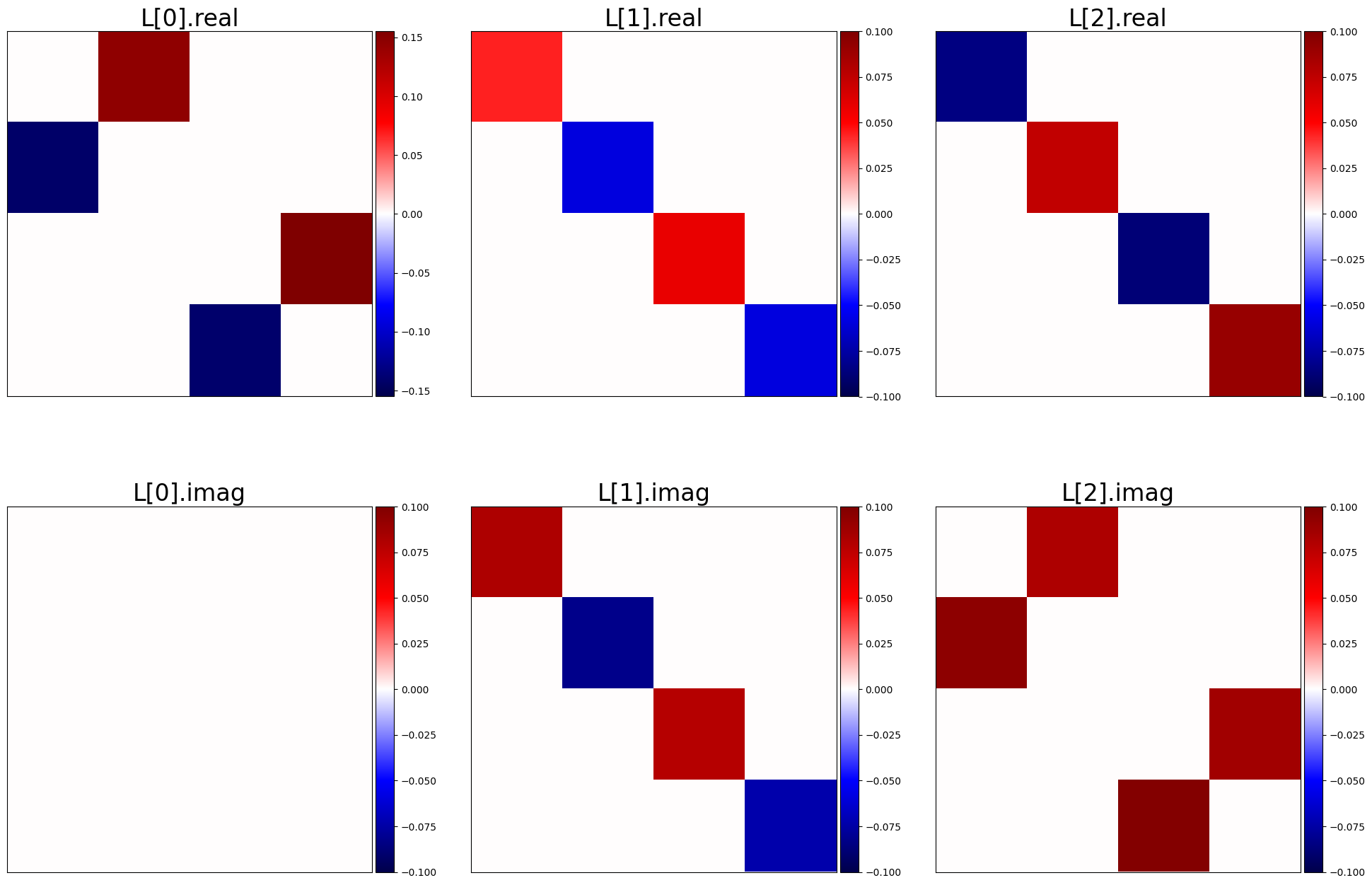}
    \caption{The complex Lie algebra discovered by LieGAN.}
    \label{fig:su2}
\end{figure}

We set the number of generator channels to $c=3$. Figure \ref{fig:su2} shows the discovered complex Lie algebra. It can be approximately written in the following numerical form:
$$
L_1=\begin{bmatrix} 0 & 1 \\ -1 & 0\end{bmatrix},\ L_2=\begin{bmatrix} 1+c_1i & 0 \\ 0 & -1-c_1i\end{bmatrix},\ L_3=\begin{bmatrix} -1 & c_2i \\ c_2i & 1\end{bmatrix}
$$
A more familiar form of $\mathfrak{su}(2)$ representation is given by
$$
u_1=\begin{bmatrix} 0 & -1 \\ 1 & 0\end{bmatrix},\ u_2=\begin{bmatrix} i & 0 \\ 0 & -i\end{bmatrix},\ u_3=\begin{bmatrix} 0 & i \\ i & 0\end{bmatrix}
$$
It can be easily checked that our discovery result is equivalent to this representation upon a change of basis:
$$
\begin{bmatrix}
    u_1 \\ u_2 \\ u_3
\end{bmatrix}=
\begin{bmatrix}
    -1 & & \\
     & \frac{i}{1+c_1i} & \\
     & \frac{1}{c_2(1+c_1i)} & \frac{1}{c_2}
\end{bmatrix}\begin{bmatrix}
    L_1 \\ L_2 \\ L_3
\end{bmatrix}
$$
Thus, we may conclude that LieGAN can identify the $\mathrm{SU}(2)$ invariance in this function.

\subsection{Rotated MNIST}
We consider the classic example of image classification on the MNIST \cite{mnist} dataset. The original dataset is transformed by rotations by up to 45 degrees so that it has $\mathrm{SE}(2)$ symmetry which includes the rotations and translations on 2D grids. We set the number of generator channels to $c=1$. We set the search space to all affine transformations on 2D space, which is 6-dimensional. The discovery result is
$$
L=\begin{bmatrix}
    0.01 & -0.66 & 0.08 \\
    0.66 & -0.01 & -0.01 \\
    0 & 0 & 0
\end{bmatrix}
$$
This can be interpreted as a mixture of rotation ($L[1,0]=-L[0,1]=0.66$) and translation ($L[0,2]=0.08$), where the magnitude of rotations is larger than the magnitude of translations. Figure \ref{fig:rotmnist} visualizes the original and transformed MNIST digits.
\begin{figure}[h]
    \centering
    \includegraphics[width=.8\textwidth]{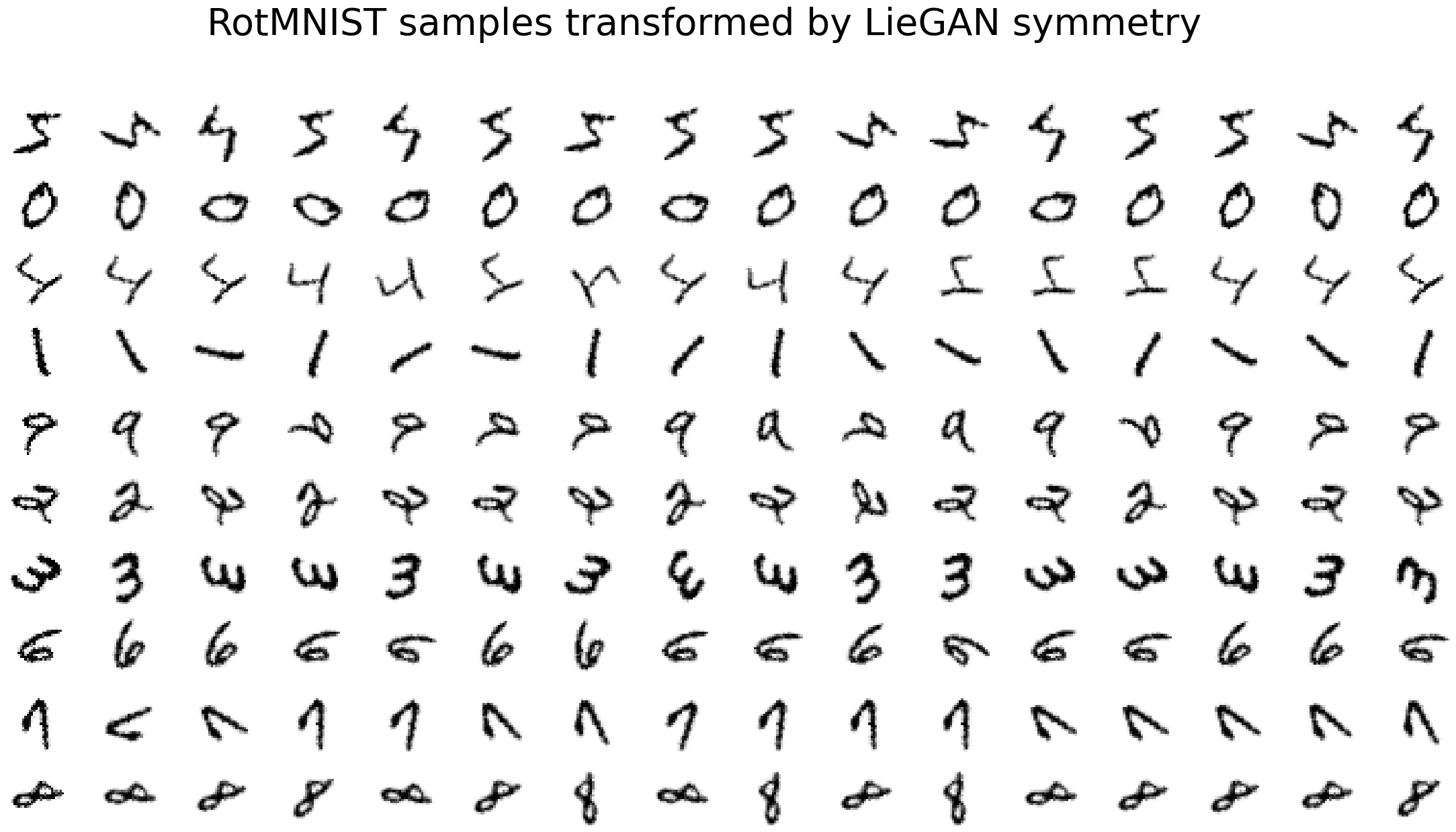}
    \caption{MNIST samples transformed by LieGAN. The first column shows the original samples from RotMNIST. For each image, we sample 15 group elements from LieGAN and plot the transformed images.}
    \label{fig:rotmnist}
\end{figure}

\subsection{Molecular Property Prediction}
We are also interested in whether LieGAN can recover $\mathrm{SE}(3)$ symmetry (the rotations and translations in 3D space), which has wide applications in computer vision, molecular dynamics, etc. Thus, we experiment on QM9 \cite{blum2009970, rupp2012fast}, where the task is to predict molecular properties based on the 3D coordinates and charges of the atoms.
\begin{figure}[h]
    \centering
    \includegraphics[width=.8\textwidth]{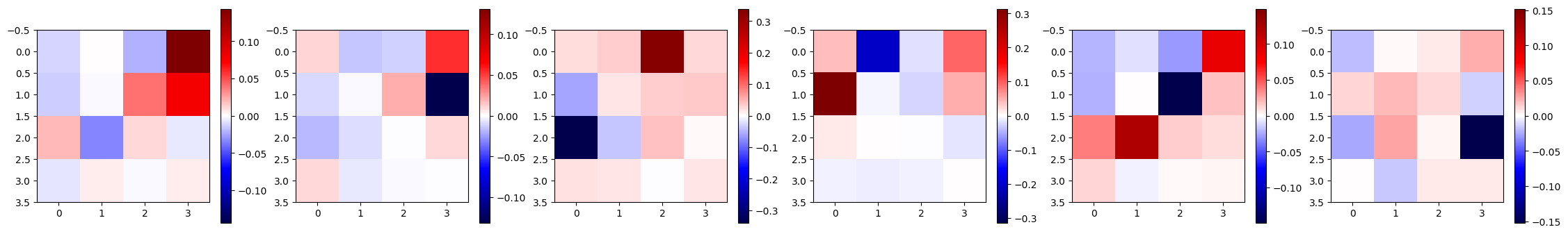}
    \caption{Discovery result for QM9 dataset.}
    \label{fig:qm9}
\end{figure}
We set the number of generator channels to $c=6$, which matches the dimension of $\mathrm{SE}(3)$. Figure \ref{fig:qm9} shows the discovered Lie algebra representations, which produce group representations that act on the affine coordinates $(x,y,z,1)$. LieGAN can discover an approximate $\mathrm{SE}(3)$ symmetry, where the skew-symmetric entries in the first three dimensions indicate rotations along different axes, and the non-zero entries in the last column indicate translations along different directions.

We can also use the discovered LieGAN symmetry to perform data augmentation during training. The discovered symmetry proves to increase the prediction accuracy on different QM9 tasks compared to a model with no symmetry, as is shown in table \ref{tab:qm9-pred}.
\begin{table}[h]
    \centering
    \begin{tabular}{c|ccc}
    \hline
    Task & No symmetry & LieGAN & $\mathrm{SE}(3)$ \\
    \hline
    HOMO & 52.7 & 43.5 & \textbf{36.5} \\
    LUMO & 43.5 & 36.4 & \textbf{29.8} \\
    \hline
    \end{tabular}
    \caption{Test MAE (in meV) on QM9 tasks. The results for no symmetry and $\mathrm{SE}(3)$ symmetry are referred from \citet{augerino}.}
    \label{tab:qm9-pred}
\end{table}


\end{document}